\documentclass{article} 
\usepackage{iclr2026_conference,times}


\usepackage{amsmath,amsfonts,bm}









\def\eqref#1{equation~\ref{#1}}









\def\1{\bm{1}}










\DeclareMathAlphabet{\mathsfit}{\encodingdefault}{\sfdefault}{m}{sl}
\SetMathAlphabet{\mathsfit}{bold}{\encodingdefault}{\sfdefault}{bx}{n}













\usepackage{hyperref}       
\usepackage{url}            
\usepackage{booktabs}       
\usepackage{amsfonts}       
\usepackage{nicefrac}       
\usepackage{microtype}      
\usepackage{xcolor}         
\usepackage{amsmath}
\usepackage{amsmath, amssymb, amsthm}
\usepackage{graphicx}
\newtheorem{theorem}{Theorem}
\usepackage{soul}
\usepackage{makecell}

\usepackage{bbm}
\usepackage{graphicx}
\usepackage{subcaption}
\newcommand{\ind}{\mathbbm{1}}
\usepackage[ruled,vlined]{algorithm2e}
\usepackage{comment} 
\theoremstyle{plain}          
\newcommand{\bigO}{\mathcal{O}}
\newcommand{\gradW}{\nabla_{\!W}}

\usepackage{graphicx}
\usepackage{subcaption}
\newtheorem{assumption}{Assumption}
\newcommand{\Vol}{\operatorname{Vol}}
\newcommand{\Ptwo}{\mathcal P_{2}(\mathcal{M})}          
\newcommand{\Wtwo}{W_{2}}                      
\newcommand{\LtwoM}[1]{\bigl\|#1\bigr\|_{L^{2}(\mathcal{M})}} 
\theoremstyle{definition}     
\newtheorem{definition}{Definition}

\newtheorem{corollary}{Corollary}
\usepackage{tabularx}

\title{Local-Curvature-Aware Knowledge Graph Embedding: An Extended Ricci Flow Approach}


\author{
Zhengquan Luo\textsuperscript{1}, 
Guy Tadmor\textsuperscript{2}, 
Or Amar\textsuperscript{3}, 
David Zeevi\textsuperscript{3}, 
Zhiqiang Xu\textsuperscript{1}\thanks{Corresponding author. zhiqiang.xu@mbzuai.ac.ae}\\
\textsuperscript{1} Mohamed bin Zayed University of Artificial Intelligence\\
\textsuperscript{2} NIOZ Royal Netherlands Institute for Sea Research\\
\textsuperscript{3} Weizmann Institute of Science
}

%

\iclrfinalcopy 
\begin{document}

\maketitle

\begin{abstract}
Knowledge graph embedding (KGE) relies on the geometry of the embedding space to encode semantic and structural relations. 
Existing methods place all entities on one homogeneous manifold—Euclidean, spherical, hyperbolic, or their product/multi‑curvature variants, to model linear, symmetric, or hierarchical patterns. 
Yet a predefined, homogeneous manifold cannot accommodate the sharply varying curvature that real‑world graphs exhibit across local regions. Since this geometry is imposed a priori, any mismatch with the knowledge graph’s local curvatures will distort distances between entities and hurt the expressiveness of the resulting KGE. 
To rectify this, we propose \textbf{RicciKGE} to have the KGE loss gradient coupled with local curvatures in an extended Ricci flow such that entity embeddings co-evolve dynamically with the underlying manifold geometry towards mutual adaptation. Theoretically, when the coupling coefficient is bounded and properly selected, we rigorously prove that i) all the edge-wise curvatures decay exponentially, meaning that the manifold is driven toward the Euclidean flatness; and ii) the KGE distances strictly converge to a global optimum, which indicates that geometric flattening and embedding optimization are promoting each other. 
Experimental improvements on link prediction and node classification benchmarks demonstrate RicciKGE's effectiveness in adapting to heterogeneous knowledge graph structures. 
\end{abstract}

\section{Introduction}
\label{sec:introduction}

The manifold hypothesis~\citep{fefferman2016testing} suggests that real-world data, despite residing in high-dimensional spaces, tend to concentrate near low-dimensional manifolds. This insight forms the theoretical foundation for a wide range of representation learning approaches, especially knowledge graph embedding. In KGE, entities and relations are mapped into continuous geometric embedding spaces, where distances and transformations are designed to capture the underlying semantic and structural patterns~\citep{cao2024knowledge}. Such embeddings have become the backbone of various reasoning tasks, including link prediction~\citep{chami2020low, sun2019rotate, zhang2019quaternion}, entity classification~\citep{ji2015knowledge, xie2016representation, yu2022jaket}, and knowledge completion~\citep{abboud2020boxe, zhang2021drug, zhang2020few}.

Following the manifold hypothesis, the development of KGE methods increasingly focuses on identifying geometric spaces that better align with the underlying structure of knowledge graphs~\citep{cao2024knowledge}. Early KGE works primarily leveraged the Euclidean manifold due to its simplicity and translation invariance properties~\citep{bordes2013translating}. However, Euclidean embeddings cannot capture the non-linear structures prevalent in real-world knowledge graphs, such as hierarchical and cyclic patterns, due to their inherently curved geometry~\citep{nickel2017poincare}. To overcome this limitation, recent approaches have extended KGE to non-Euclidean manifolds, such as hyperbolic spaces for hierarchical relations~\citep{balazevic2019multi, chami2020low, chami2020low, wang2021hyperbolic} and spherical spaces for symmetric or periodic structures~\citep{lv2018differentiating, dong2021hypersphere, xiao2015one}. These advances better model specific structural motifs, but fall short of reflecting the rich local geometric heterogeneity in real knowledge graphs, which often comprise dense clusters, sparse chains, asymmetric relations, and layered hierarchies.

To model this complexity, more flexible embedding approaches have emerged, either employing dynamic curvature combinations~\citep{yuan2023knowledge, liu2024bridging} or leveraging product manifolds composed of multiple simpler geometries~\citep{xiong2022ultrahyperbolic, zheng2022hyperbolic, cao2022geometry, li2024generalizing}, yet they still rely on a predefined and static geometric prior. This leads to a critical yet under-addressed issue: 
\begin{center}
\vspace{-0.1cm}
\emph{A homogeneous geometric prior, predefined before observing the data, forces embeddings to adapt to the ``geometrically rigid'' space, rather than shaping embedding space by real data manifold.} 
\vspace{-0.1cm}   
\end{center}
Specifically, the core limitation lies in the use of a predefined manifold, which enforces uniform geometric constraints that fail to capture the localized curvature heterogeneity of real-world knowledge graphs. This misaligns the embedding space with the intrinsic manifold of the data, the optimal representation under the manifold hypothesis, resulting in degrading the embedding quality and impairing downstream reasoning (a comprehensive review of related work is provided in Appendix~\ref{app:related_work}.).

To resolve this misalignment, we propose \textbf{RicciKGE}, a curvature-aware KGE framework that couples the KGE loss gradient with the local discrete curvature~\citep{fu2025discrete, naama2024ironing} via an extended Ricci flow~\citep{choudhury2024evolution}. Unlike static manifold-based methods, RicciKGE allows manifold geometry and entity embeddings to co-evolve in a closed feedback loop: curvature drives the metric to smooth curved regions and guide embedding updates, while the updated embeddings reshape curvature in return. This self-consistent process continuously aligns geometry with structure, eliminating the distortion induced by global geometric priors.


To theoretically ground this coupling, we establish convergence guarantees for the co-evolution process. Under mild geometric assumptions~\citep{hebey2000nonlinear, saloff1994global} and continuity/convexity conditions, we prove that all edge-wise Ricci curvatures decay exponentially to zero, thereby flattening the latent manifold toward a Euclidean geometry. Importantly, during this decay process, the local heterogeneity of curvature is not annihilated but gradually absorbed into the embedding updates through the extended Ricci flow, enabling entity embeddings to encode these geometric irregularities. Building on this absorption mechanism, we theoretically prove that the KGE objective still converges linearly to a global optimum under the evolving geometry. Since only entity embeddings remain learnable while curvature vanishes asymptotically, the final embeddings both preserve the imprints of local curvature variations and achieve global convergence.

Empirical results demonstrate that RicciKGE delivers consistent performance gains in two fundamental tasks.
In link prediction, injecting RicciKGE’s dynamic curvature flow into a variety of classical KGE models~\citep{bordes2013translating, yang2014embedding, sun2019rotate,chami2020low,li2024generalizing} with different distance functions yields universal improvements in WN18RR~\citep{dettmers2018convolutional}, FB15K-237~\citep{toutanova-chen-2015-observed}, and YAGO3-10~\citep{mahdisoltani2013yago3}, showing that the co-evolution of geometry and embedding is generalized between architectures. In node classification, RicciKGE surpasses both curvature-flow-only models (e.g. GNRF~\citep{chen2025graph}) and standard message passing baselines (GCN~\citep{kipf2016semi}, GAT~\citep{velivckovic2017graph}), demonstrating the advantage of jointly evolving curvature and embeddings. In addition, we visualize the convergence of curvature and embedding loss, providing empirical evidence for the theoretical convergence. All code will be made publicly available upon acceptance.
In summary, we make the following contributions in this work.
\begin{itemize}
\item \emph{Insight:} We expose the inherent limitation of predefined homogeneous geometries in KGE, which forces embeddings to fit a rigid space rather than adapt to real data manifold.
\item \emph{Method:} We propose \textbf{RicciKGE}, a curvature-aware framework coupling KGE optimization with extended Ricci flow, enabling the co-evolution of manifold geometry and entity embeddings in a closed loop.
\item \emph{Convergence:} We prove that RicciKGE achieves exponential curvature flattening and linear embedding convergence, effectively encoding local geometric heterogeneity into entity embeddings.
\end{itemize}



\begin{figure}
  \centering
  \includegraphics[width=1.0\linewidth]{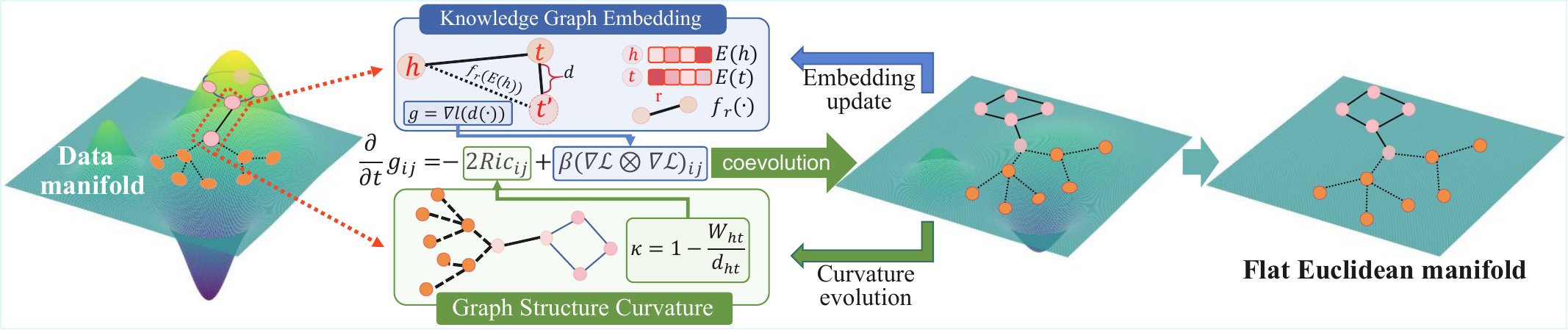}
  \caption{The framework of RicciKGE, which couples KGE gradients and local graph curvature in an extended Ricci flow, driving the latent manifold toward Euclidean flatness. The evolving geometry, in turn, guides curvature and gradient convergence and induces a closed-loop mechanism for bidirectional fitting between space and embedding.}
  \vspace{-0.5cm}
  \label{fig:principle}
\end{figure}

\section{Preliminaries}
\label{sec:prelim}
We briefly review the necessary preliminaries on knowledge graph embedding (KGE), Ricci curvature, and Ricci flow, providing the comprehensive background necessary for understanding our approach.

\textbf{Knowledge graph embedding.}
A knowledge graph is defined as $\mathcal{G}=(\mathcal{V},\mathcal{R},\mathcal{T})$, where $\mathcal{V}$ is the entity set, $\mathcal{R}$ is the relation set, and $\mathcal{T}\subseteq\mathcal{V}\times\mathcal{R}\times\mathcal{V}$ is the set of factual triples. 
Knowledge graph embedding (KGE) methods learn low-dimensional representations $E(h),E(t)\in\mathbb{R}^d$ for entities and relation-specific transformations $f_r:\mathbb{R}^d\!\to\!\mathbb{R}^d$, such that a scoring function $\phi(E(h),f_r,E(t))$ assigns higher values to true triples than to corrupted ones. 
A general formulation is
\(
\phi(h,r,t) := d\!\big(f_r(E(h)),\,E(t)\big),
\)
where $d(\cdot,\cdot)$ is a differentiable similarity or distance function. This unified view systematically covers the major classes of KGE models: translation-based methods such as TransE~\citep{bordes2013translating}, bilinear methods such as DistMult~\citep{yang2014embedding}, geometric methods such as RotatE~\citep{sun2019rotate}, and neural methods such as ConvE~\citep{dettmers2018convolutional}.
The objective of KGE is typically based on a margin-ranking or logistic loss, e.g.,
\begin{equation}
\small
\mathcal{L} = \sum_{(h,r,t)\in\mathcal{T},(h',r,t')\notin\mathcal{T}} 
\big[ \gamma + \phi(h,r,t) - \phi(h',r,t') \big]_+,
\label{eq:KGE_objective}
\end{equation}
where $\gamma>0$ is a margin hyperparameter and $[x]_+=\max(0,x)$. 
Beyond Euclidean embeddings, KGE has been extended to Riemannian manifolds: hyperbolic spaces capture hierarchical relations, while spherical spaces model symmetric or cyclic patterns~\citep{cao2024knowledge}. However, their assumption of homogeneous curvature overlooks the heterogeneous local geometry of real-world knowledge graphs~\citep{cao2024knowledge}. To overcome this, we leverage Ricci curvature as a localized measure of distortion, which underpins our curvature-aware regularization.

\textbf{Ricci curvature on graphs.} 
Ricci curvature measures how neighborhood transport deviates from ambient distance, capturing local geometric irregularities. 
On graphs, the Ollivier--Ricci curvature~\citep{ollivier2007ricci, hehl2024ollivier} of an edge $(i,j)$ with distance $d(i,j)$ is
\[
\kappa(i,j) = 1 - \frac{W_1(\mu_i,\mu_j)}{d(i,j)},
\]
where $W_1$ is the Wasserstein-1 distance between neighborhood measures $\mu_i$ and $\mu_j$. 
Positive curvature implies overlapping neighborhoods, while negative curvature indicates bottlenecks or tree-like expansions. 
Since knowledge graphs contain diverse relational motifs, their edge curvatures vary widely, making Ricci curvature a natural basis for curvature-aware KGE in non-uniform spaces.

\textbf{Ricci flow and its extension.}
Given the heterogeneous local curvatures observed in knowledge graphs, a natural next step is to regulate their evolution through Ricci flow. 
Ricci flow~\citep{hamilton1988ricci} evolves a Riemannian metric by contracting positively curved regions and expanding negatively curved ones, progressively smoothing geometry over time. 
For graphs, we adopt its discrete variant~\citep{naama2024ironing}, which updates edge weights according to local curvature,
\(
w^{k+1}(i,j) = w^k(i,j)\bigl(1-\tfrac{1}{2}\kappa^k(i,j)\bigr),
\)
followed by normalization. 
This process gradually reduces curvature disparities while preserving relative edge structures, and under mild conditions, converges to a stable metric, offering a tractable way to regularize knowledge graphs. 
However, discrete Ricci flow alone evolves independently of the KGE  objective and does not directly influence entity representations. 
To overcome this limitation, we introduce an extended Ricci flow that:
\begin{equation}
\small
\partial_t g_{ij} = -2\,\mathrm{Ric}_{ij} + \beta\,(\nabla \mathcal L \otimes \nabla \mathcal L)_{ij},
\label{eq:extendricciflow}
\end{equation}
where $\beta$ is a coupling coefficient and $\mathcal{L}$ is the KGE loss. 
Here $g_{ij}$ denotes the components of the evolving metric tensor, which can be interpreted as the underlying edge-weight structure of the graph. $\mathrm{Ric}_{ij}$ is the Ricci curvature tensor capturing local geometric distortion. 
This extended formulation integrates curvature smoothing with gradient-based updates, ensuring that geometry evolution and representation learning proceed in tandem. 

\section{RicciKGE}
\label{sec:Methodology_ricciKGE}
This section introduces the core mechanism, where entity embeddings learn seamlessly through a gradient-coupled Ricci flow: edge curvature evolution and embedding updates form a dynamic feedback loop, continuously adapting the geometry during training.

\textbf{Gradient-extended Ricci flow with learnable entity embedding.}
Building upon the discrete~\citep{naama2024ironing} and extended~\citep{choudhury2024evolution, lei2025geometric} Ricci flow frameworks, and inspired by embedding‐based adaptations~\citep{chen2025graph}, we extend the classical distance update by replacing static nodes \((i,j)\) with dynamic KGE triplet embeddings \(\bigl(E^{(k)}(h),f_r(\cdot),E^{(k)}(t)\bigr)\).  Taking a forward‐Euler step of size \(\Delta s\) on the extended Ricci flow in Eq. \ref{eq:extendricciflow}, identifying \(g_{ij}\equiv w_{ij}\), we get \(w_{ij}^{k+1} = w_{ij}^k
- \Delta s\,\kappa_{ij}^k\,w_{ij}^k
+ \Delta s\,\beta\, (\nabla \mathcal L^k \otimes \nabla \mathcal L^k)_{ij}.\)
Choosing \(\Delta s=\tfrac12\), which is the same in~\citep{naama2024ironing}, yields the discrete extended Ricci‐flow rule specialized to the KGE triple \((h,r,t)\) as follows:
\begin{eqnarray}
\small
&&w^{k+1}\bigl(E^{(k+1)}(h),f_r(\cdot),E^{(k+1)}(t)\bigr)\nonumber\\
&=&w^{k}\bigl(E^{(k)}(h),f_r(\cdot),E^{(k)}(t)\bigr)
\Bigl(1 - \tfrac12\,\kappa^{k}\bigl(E^{(k)}(h),E^{(k)}(t)\bigr)\Bigr)
+ \tfrac{\beta}{2}\,\nabla_{E(h)}\mathcal L^k\,\nabla_{E(t)}\mathcal L^k, \label{eq:riccikge_flow}
\end{eqnarray}
where the curvature \(\kappa^{k}\left(E^{(k)}(h), E^{(k)}(t)\right)\) is computed based on the current entity embeddings. Both the metric and the entity embeddings co-evolve under the discrete flow. 

In this formulation, each relation \(r\) is modeled as a learnable mapping \(f_r(\cdot)\) through KGE optimization but remains static during the Ricci flow evolution. The reason for this choice is two-fold. One is \textit{semantic invariance}. The relations encode abstract transformations that should remain consistent across graph regions. The preservation of invariant relation mappings is crucial to maintaining semantic consistency, as emphasized in~\citep{lin2015learning}. The other is \textit{optimization stability}. Coupling the evolution of relations, entities, and curvature would make semantic learning entangle itself with geometric regularization, and thus destabilize training. By evolving only entity embeddings reflective of local structures, the model dynamically absorbs curvature heterogeneity across the entire knowledge graph while preserving global relational semantics, which is consistent with general intuitions in geometry-aware knowledge graph embedding~\citep{lin2015learning}.

\textbf{Entity embedding update.}
The edge-weight function for our extended Ricci flow is defined as \(w^{k}\left(E^{(k)}(h), f_r(\cdot), E^{(k)}(t)\right):=exp\left\{-d\left(f_r(E^{(k)}(h)), E^{(k)}(t)\right)\right\}\), for which a reason will be clear later. For simplicity, the loss function is defined as the distance \(\mathcal L \left(E^{(k)}(h), f_r(\cdot), E^{(k)}(t)\right):=d\left(f_r(E^{(k)}(h)), E^{(k)}(t)\right).\)
The entity embedding update can be formulated as a constrained optimization problem, where the discrete extended Ricci flow serves as a constraint. This problem can be solved using the method of Lagrange multipliers similar as in~\citep{chen2025graph}, yielding embedding updates that minimally perturb the geometry while complying with the flow dynamics.
Let \(a =\nabla_{E(h)}f_r\bigl(E(h)\bigr)^\top\,\nabla_{f_r(h)}d \) and \(b =\nabla_{E(t)}d\). The edge‐weight updating function can be written as: \(w^{k+1}
= w^k\Bigl(1-\tfrac12\,\kappa^k\Bigr)
\;+\;\tfrac{\beta}{2}\,\langle a,b\rangle.\)
Since \(w=e^{-d}\), the induced change in distance is:
\begin{equation}
\small
    \delta_d^k
=-\frac{w^{k+1}-w^k}{w^k}
=\tfrac12\,\kappa^k \;-\;\frac{\beta}{2\,w^k}\,\langle a,b\rangle.
\label{eq:dis_up}
\end{equation}
 After that, the embedding update can be formulated as a constrained optimization problem using the method of Lagrange multipliers. Specifically, the embedding updates for the head and tail entities are defined as \(\delta_h^k := E^{(k+1)}(h) - E^{(k)}(h)\) and \(\delta_t^k := E^{(k+1)}(t) - E^{(k)}(t)\). The goal is to minimize the overall perturbation measured by the $L_2$ norm:
\begin{equation}
\min_{\delta_h^k,\delta_t^k}\;\|\delta_h^k\|^2+\|\delta_t^k\|^2
\quad\text{s.t.}\quad
\langle a,\delta_h^k\rangle+\langle b,\delta_t^k\rangle=\delta_d^k.
\end{equation}
To ensure that the updated embeddings satisfy the discrete Ricci flow evolution of edge weights, we formulate the following Lagrangian:
\begin{equation}
\small
\widehat{\mathcal{L}}
=\|\delta_h^k\|^2+\|\delta_t^k\|^2
+\lambda^k\bigl(\langle a,\delta_h^k\rangle+\langle b,\delta_t^k\rangle-\delta_d^k\bigr),
\end{equation}
where $\lambda^k$ is the Lagrange multiplier. Solving the first-order optimality conditions, i.e., setting the partial derivatives of $\widehat{\mathcal{L}}$ with respect to $\delta_h$ and $\delta_t$ to zero, while minimizing the update norm, yields
\(\delta_h^k=-\tfrac{\lambda^k}2\,a,\;\delta_t=-\tfrac{\lambda^k}2\,b\) which then leads to the closed-form expression for the multiplier $\lambda^k$:
\begin{equation}
\small
\lambda^k
= -\frac{2\,\delta_d^k}{\|a\|^2 + \|b\|^2}
= -\frac{\displaystyle \kappa^k
               - \frac{\beta}{w^k}\,
                 \nabla_{f_r(h)}d^\top\,
                 \nabla_{E(h)}f_r\bigl(E(h)\bigr)\,
                 \nabla_{E(t)}d}
         {\displaystyle \bigl\|G_h^{1/2}\,\nabla_{f_r(h)}d\bigr\|^2
          + \bigl\|\nabla_{E(t)}d\bigr\|^2}\,,
\label{eq:evolation_lambda}
\end{equation}
where $G_h := \nabla_{E(h)} f_r(E(h)) \nabla_{E(h)} f_r(E(h))^\top$ is the local structure matrix capturing the transformation Jacobian of the head entity embedding. Until now, only one edge weight is taken into consideration. In fact, each entity $v$ typically participates in multiple triples in the knowledge graph, either as a head or tail entity.  We define the set of all triples involving $v$ as \(\mathcal{T}_v := \{(h, r, t) \in \mathcal{T} \mid v \in \{h, t\}\}.\) To obtain a coherent embedding evolution, we aggregate the contributions from all incident triples, the final entity embedding updating as \(\Delta E^k(v)  =E^{(k+1)}(v)-E^{(k)}(v)\) is given by:
\begin{equation}
\small
   \Delta E^k(v) =  
\sum_{(h,r,t) \in \mathcal{T}}\lambda^k(h,t)
\left(
\ind(v=h) \nabla_{E(h)} f_r(E(h))^\top \nabla_{f_r(h)} d
+
\ind(v=t) \nabla_{E(t)} d
\right),
\end{equation}
where $\ind(\cdot)$ is the indicator function selecting the appropriate gradient contributions based on whether $v$ appears as head or tail. This unified formulation ensures that entity embeddings are updated along directions that simultaneously improve task-specific objectives and promote geometric regularity.  The overall training procedure is summarized in Algorithm~\ref{alg:riccikge_grad_structured_all}, which outlines the coupled distance flow and embedding update steps.

\begin{algorithm}[t]
\caption{RicciKGE Training with Discrete Distance Flow}
\label{alg:riccikge_grad_structured_all}
\KwIn{Triplets $\mathcal{T}$, entity embeddings $\mathbf{E}^k$ in $k$ iteration, graph $G$,  loss $\ell(\cdot)$, curvature $\kappa(\cdot)$, relation mappings $f_r(\cdot)$ \textit{(updated by the chosen KGE base method)}, coupling $\beta$}
\KwOut{Updated embeddings $\mathbf{E}^{(k+1)}$}

\ForEach{$(h, r, t) \in \mathcal{T}$}{
    Compute distance $d^k_{ht} = \|f_r(E^k(h)) - E^k(t)\|$\;
    Weight $w^k_{ht} = \exp(-d^k_{ht})$, curvature $\kappa^k_{ht}=\kappa(h,t,E,G,k)$\;
    Step size $\eta^k_g = \beta / (2w^k_{ht})$, gradient $g^k = \nabla_d \ell(d^k_{ht})$\;
    
    \textit{Distance flow:} $d_{ht}^{k+1} = d^k_{ht} - \eta^k_g g^k + \tfrac{1}{2}\kappa^k_{ht}$; $\delta^k_d = d_{ht}^{k+1} - d^k_{ht}$\;

    Compute gradients $\lambda^k \gets -2\delta_d / (\|\nabla{E^k(h)}\|^2 + \|\nabla{E^k(t)}\|^2)$\;

    Update: $E^{(k+1)}(h) = E^{k}(h) + \lambda^k \nabla{E^{k}(h)}$, $E^{(k+1)}(t) = E^{k}(t) + \lambda^k \nabla{E^{k}(t)}$\;
}
\Return{$\mathbf{E}^{(k+1)}$}
\end{algorithm}

\section{Convergence}
\label{sec:Curvature_Convergence}
\textbf{Curvature convergence}
In the preceding subsection, we introduce RicciKGE’s unified update mechanism, in which an extended Ricci-flow term adaptively refines edge weights to smooth local curvature, while a gradient-based term minimizes the triplet scoring objective to preserve global semantic coherence. 
Unlike static, predefined embedding spaces, this coupled evolution dynamically adapts to the geometry by jointly evolving entity embeddings and edge weights, allowing the model to faithfully reflect the underlying curvature of real-world knowledge graphs. To validate its effectiveness, we must demonstrate that under this iterative mapping, the Ricci curvature on every edge converges pointwise to zero, i.e., the manifold becomes asymptotically flat, thereby ensuring that all intrinsic curvature heterogeneity is faithfully captured in the learned entity embeddings. In what follows, we will establish a convergence theorem for the coupled iteration under assumptions of bounded geometry and weakly Lipschitz continuity, providing a rigorous theoretical foundation for the joint evolution of geometry (via Ricci flow) and semantics (via the distance). Detailed or missing proofs are deferred to the appendix~\ref{app:curv_convergence} and~\ref{app:dis_convergence}.

\begin{assumption}[Geometric and analytic regularity]\label{asm:geom-analytic}
The evolving metric space $(\mathcal{M}, g(s))$ satisfies the following for all $s\ge0$:
i)  (Volume) $\Vol(\mathcal{M})\ge V_0 > 0$;
ii) (Diameter) $\mathrm{diam}(\mathcal{M})\le D < \infty$;
iii) (Sobolev inequality\footnote{$n=\dim \mathcal{M}$, \(\mathcal{C}^\infty(\mathcal{M})\) stands for the space of real-valued functions on \(\mathcal{M}\) that are differentiable to all orders ~\citep{ecole1877annales,saloff1994global}.}) for all $f\in \mathcal{C}^\infty(\mathcal{M})$, there exist a constant $C_\tau$ holding that: \(\|f\|_{L^{\frac{2n}{\,n-2\,}}}^{2}
\le
C_\tau\Bigl(\|\nabla f\|_{L^{2}}^{2}
      + V_0^{-1}\,\|f\|_{L^{2}}^{2}\Bigr),\)
and iv)  (Spectral gap\footnote{This inequality is a discrete analogue of the classical Bochner–Poincaré inequality, and it holds for any connected graph. In particular, Cheeger’s inequality gives the lower bound $\lambda_1 \ge h^2 / 2$, where $h$ is the Cheeger constant of the graph~\citep{spielman2012spectral}.}) for any smooth tensor field $T$, \(\int_\mathcal{M} \|\nabla T\|^2\,dV \ge \lambda_1 \int_\mathcal{M} \|T\|^2\,dV\), where $\lambda_1$ is the first non-zero eigenvalue of the (weighted) graph Laplacian $-\Delta$, also known as the \emph{spectral gap}.  
\vspace{-0.2cm}
\end{assumption}

This assumption provides a volume lower bound \(V_0\) that prevents collapse, a uniform diameter bound \(D\) that prevents blow-up, and a time-independent Sobolev constant that keeps the analytic control consistent. To proving curvature-flattening convergence, we present the following definitions.

\begin{definition}[Wasserstein–gradient Lipschitz constant]\label{def:lw}
Let \(\mathcal{F} : \Ptwo \to \mathbb R\) be a functional in the $2$-Wasserstein space \(\Ptwo\) of Borel probability measures on \(\mathcal{M}\) with finite second moments.  
Denote its Wasserstein gradient at \(\rho\in\Ptwo\) by \(\gradW \mathcal{F}(\rho)\).
We define
\begin{equation}
\small
  \mathcal{F}_{W} :=
  \sup_{\rho_{1}\neq\rho_{2}}
  \frac{\LtwoM{\gradW \mathcal{F}(\rho_{1})-\gradW \mathcal{F}(\rho_{2})}}
       {\Wtwo(\rho_{1},\rho_{2})},
\label{eq:Wassersteingradient}
\end{equation}
where \(\Wtwo(\rho_{1},\rho_{2})\) is the $2$-Wasserstein distance
between the two measures and \(\LtwoM{\cdot}\) is the
$L^{2}$-norm taken with respect to the Riemannian volume element
\(dV\).
\end{definition}

In our context, $\mathcal{F}_{W}$controls the coupling strength between the geometry and the loss landscape, and thus drives the curvature-flattening process formalized below.

\begin{theorem}[Curvature-flattening convergence]\label{thm:curv-zero}
Let \(\{(\mathcal{M},g(s))\}_{s\ge0}\) evolve under the discrete extended Ricci-KGE flow in Eq.~\ref{eq:riccikge_flow}.The curvature energy\footnote{Here and throughout, we let $\|\cdot\|$ denote the pointwise Hilbert–Schmidt norm of a tensor, e.g., $\|Ric\|^2 := \sum_{i,j} Ric_{ij}^2$, and $\|\cdot\|_{L^p}$ the corresponding $L^p$ norm over the manifold. In particular, $\| \cdot \|_{L^\infty}$ denotes the pointwise maximum (as $p \to \infty$).
}ias defined as:\(R(s) := \int_{\mathcal{M}} \| \mathrm{Ric}(g(s)) \|^2 \, dV_{g(s)}\), which starts from an initial metric \(g(0)\) whose curvature energy \( R(0):=\int_{\mathcal{M}}\!\| Ric_{g(0)}(x)\|^{2}\,dV_{g(0)}\le K_0\) is finite and satisfying Assumption~\ref{asm:geom-analytic}. If the coupling coefficient \(\beta\) satisfies \(0<\beta<\frac{\lambda_1}{L_W^{\,2}\sqrt{K_0}},\) then the discrete Ricci curvature converges pointwise to zero as \(\max_{h,t}\bigl|\kappa_{ht}(s)\bigr|=0.\) Moreover, for any \(\varepsilon>0\),  there exists a constant $C$ such that:
\begin{equation}
\small
\max_{h,t}\bigl|\kappa_{ht}(s)\bigr|\le\varepsilon, 
\;
\forall\, s\ge S(\varepsilon):= \frac{1}{\,\lambda_1-\beta \mathcal{F}_W^{\,2}\sqrt{K_0}}\log\!\Bigl(\tfrac{2C\sqrt{K_0}}{\varepsilon}\Bigr),
\label{eq:convergence}
\end{equation}
which means that the curvature enters the \(\varepsilon\)-neighborhood of flatness in finite time \(S(\varepsilon)\) with convergence rate \(\bigO(\log(\tfrac{1}{\varepsilon}))\).
\vspace{-0.3cm}
\end{theorem}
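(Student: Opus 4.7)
The plan is to derive an energy-dissipation inequality for $R(s)$ and then upgrade its exponential decay to a pointwise bound on the discrete Ollivier--Ricci curvature. First I would differentiate $R(s)$ along the extended flow in Eq.~\ref{eq:extendricciflow}. Under the Ricci component $\partial_s g_{ij}=-2\mathrm{Ric}_{ij}$, Hamilton's Bochner-type evolution gives $\partial_s\|\mathrm{Ric}\|^2=\Delta\|\mathrm{Ric}\|^2-2\|\nabla\mathrm{Ric}\|^2+Q(\mathrm{Ric})$ with $Q$ collecting cubic curvature terms, while the additional source $\beta(\nabla\mathcal L\otimes\nabla\mathcal L)_{ij}$ contributes a perturbation $P(s)$ that is linear in $\mathrm{Ric}$ and quadratic in $\nabla\mathcal L$. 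Integrating over $\mathcal M$, accounting for the moving volume form via $\partial_s(dV)=-\tfrac12\mathrm{tr}_g(\partial_s g)\,dV$, and applying Stokes to kill $\Delta\|\mathrm{Ric}\|^2$ produces
\begin{equation*}
\frac{d}{ds}R(s)\le -2\int_{\mathcal M}\|\nabla\mathrm{Ric}\|^2\,dV+\int_{\mathcal M} Q(\mathrm{Ric})\,dV+\int_{\mathcal M} P(s)\,dV.
\end{equation*}
The spectral-gap condition in Assumption~\ref{asm:geom-analytic}(iv) replaces the first term by $-2\lambda_1 R(s)$, reducing the problem to controlling the remaining two integrals.

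Next I would bound the perturbation via Definition~\ref{def:lw}: viewing $\mathcal L$ as a functional on $\Ptwo$ with $\mathcal{F}_W$-Lipschitz Wasserstein gradient, a Cauchy--Schwarz estimate against $\mathrm{Ric}$, paired with the a priori bound $R(s)\le K_0$, yields $|\int P\,dV|\le 2\beta\mathcal{F}_W^{\,2}\sqrt{K_0}\,R(s)$ up to absorbed constants. For the cubic term $\int Q(\mathrm{Ric})\,dV$, the Sobolev inequality in Assumption~\ref{asm:geom-analytic}(iii) together with the volume lower bound $V_0$ and diameter bound $D$ gives $\|\mathrm{Ric}\|_{L^\infty}\le c(C_\tau,V_0,D)\sqrt{K_0}$, so $|\int Q\,dV|$ is again absorbed into the leading dissipation once the smallness condition on $\beta$ is imposed. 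Combining these estimates produces the clean inequality
\begin{equation*}
\frac{d}{ds}R(s)\le -2\bigl(\lambda_1-\beta\mathcal{F}_W^{\,2}\sqrt{K_0}\bigr)R(s),
\end{equation*}
and Gr\"onwall's lemma yields $R(s)\le K_0\exp\bigl(-2(\lambda_1-\beta\mathcal{F}_W^{\,2}\sqrt{K_0})s\bigr)$ under the hypothesis $\beta<\lambda_1/(\mathcal{F}_W^{\,2}\sqrt{K_0})$.

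Finally, to convert $L^2$ decay of $R$ into the pointwise bound on the discrete curvature, I would run a Moser/De Giorgi-style iteration of the Sobolev inequality to obtain $\|\mathrm{Ric}(g(s))\|_{L^\infty}\le C\sqrt{R(s)}$ with $C=C(C_\tau,V_0,D)$, and then relate the Ollivier--Ricci curvature on an edge to the continuous tensor along the connecting geodesic, giving $\max_{h,t}|\kappa_{ht}(s)|\le C\|\mathrm{Ric}\|_{L^\infty}\le C\sqrt{R(s)}$. Solving $C\sqrt{K_0}\exp\bigl(-(\lambda_1-\beta\mathcal{F}_W^{\,2}\sqrt{K_0})s\bigr)\le\varepsilon/2$ for $s$ returns the stated $S(\varepsilon)$. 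The principal obstacle, in my view, is closing the a priori bound $R(s)\le K_0$ uniformly in $s$: since both $Q(\mathrm{Ric})$ and $P(s)$ depend nonlinearly on $R(s)$ itself, a bootstrap argument is required --- assume $R(s)\le 2K_0$ on a maximal interval $[0,T^*)$, derive the dissipation inequality, and show that it forces $R(s)<K_0$ strictly for $s>0$, thereby extending $T^*=\infty$. Verifying that $C_\tau$, $\lambda_1$, and $V_0$ remain stable along the gradient-coupled evolution --- which Assumption~\ref{asm:geom-analytic} is designed to enforce, but which the new source term could a priori spoil --- is the most delicate step of the argument.
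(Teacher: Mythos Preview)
Your proposal is correct and follows essentially the same route as the paper: differentiate $R(s)$ along the coupled flow, use the spectral gap to convert $-2\int\|\nabla\mathrm{Ric}\|^2$ into $-2\lambda_1 R(s)$, bound the semantic perturbation by $2\beta\mathcal{F}_W^{\,2}\sqrt{K_0}\,R(s)$ via the Wasserstein--Lipschitz constant, apply Gr\"onwall, and then upgrade $L^2$ decay to $L^\infty$ by Moser iteration before passing to the discrete curvature.

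There are two minor procedural differences worth noting. First, for the last step the paper does not integrate $\mathrm{Ric}$ along a geodesic; instead it identifies $g_{ht}\equiv w_{ht}$, matches the discrete update with the continuous flow to extract the algebraic relation $2\,\mathrm{Ric}_{ht}=\kappa_{ht}\,w_{ht}$, and then invokes the uniform weight bound $w_{ht}\in[e^{-D},1]$ from the diameter assumption to get $|\kappa_{ht}|\le 2e^{D}\|\mathrm{Ric}\|_{L^\infty}$. Second, the paper does not isolate or absorb the cubic reaction term $Q(\mathrm{Ric})$ as you propose; it simply declares all nonlinear curvature contractions ``higher-order'' and drops them, and likewise does not set up the bootstrap for $R(s)\le K_0$ that you flag --- it just uses $R(s)^{3/2}\le\sqrt{K_0}\,R(s)$ without comment. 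In that sense your sketch is more honest about where the analytic work actually lies.
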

\begin{proof}[Proof sketch]
 Differentiating under the coupled flow yields \(\frac{dR}{ds} =-2\!\int_\mathcal{M}\|\nabla Ric\|^2\,dV +2\beta\!\int_\mathcal{M}\| Ric\|\,\|\nabla\mathcal L\|^2\,dV.\) Applying the Bochner (or Poincaré) inequality~\citep{dai2012comparison} \(\int_\mathcal{M} \|\nabla Ric\|^2 \,dV \ge\lambda_1
\int_\mathcal{M} \|Ric\|^2 \,dV,\) and the Lipschitz bound \(\|\nabla \mathcal L\|_{L^2}\le \mathcal{F}_W\sqrt{R}\) together with Young’s inequality, one can show that \(2\beta\int_\mathcal{M} \|Ric\|\,\|\nabla\mathcal L\|^2\,dV\le2\beta\,\mathcal{F}_W^2\sqrt{K_0}\,R(s)\). Hence, \(\frac{dR(s)}{ds}\le -C_r\,R(s)\), where $C_r=2\lambda_1-2\beta \mathcal{F}_W^2\sqrt{K_0}$. Grönwall’s lemma~\citep{gronwall1919note} then gives \(R(s)\le R(0)\,e^{-C_r s}.\) Invoking the uniform Sobolev inequality and a standard Moser iteration~\citep{saloff2009sobolev} upgrades this to \(L^2\)-decay, \(\|Ric\|_{L^\infty}\le C \,\sqrt{K_0}\,e^{-C_r s/2}.\) Finally, the manifold $\mathcal{M}$ is closed (i.e., compact and without boundary) and the uniform edge‐weight bounds \(|w_{ht}|\in[e^{-D},1]\) ensure the curvature vanishes in the limit.
\end{proof}

Theorem~\ref{thm:curv-zero} guarantees that the geometry induced by the coupled Ricci–KGE flow becomes asymptotically flat, progressively smoothing out edge-level curvature variations caused by non-uniform relational structures in real-world knowledge graphs. 
This flattening arises from a feedback loop where curvature guides embedding updates, and updated embeddings in turn reshape curvature, driving the co-evolution of geometry and representation toward consistency. 
While the coupled flow eventually drives the manifold toward a near-Euclidean regime, the curvature–gradient interactions that occur along the way are progressively absorbed into the embeddings. 
In this sense, RicciKGE captures heterogeneous structural signals during the transient phase, and the final flat geometry provides a stable and well-conditioned space for optimization.
Since the optimization process is intrinsically coupled with geometric evolution, it remains to examine whether the associated distance updates also converge, ensuring that curvature flattening indeed leads to stable metric optimization.

\textbf{Distance convergence}
Building on the uniform curvature flattening from Theorem~\ref{thm:curv-zero}, we now relax the definition of loss by treating it as a convex function of distance, i.e., \( \mathcal{L} = \ell(d) \) with \( \ell(\cdot) \) convex. 
This relaxation is natural in the KGE setting, since most methods do not minimize the raw distance \(d\) directly but instead optimize a convex transformation of it~Eq.~\ref{eq:KGE_objective}. 
Such a convexity assumption is standard in both KGE and convex optimization, and under this mild condition the update in Eq.~\ref{eq:dis_up} becomes a perturbed convex optimization, where the perturbation stems from an absolutely summable error \(\sum_{k=0}^\infty |\kappa^k| < \infty\). 
For completeness, the precise requirements on $\beta$, convexity, and weight regularity are provided in Appendix~\ref{app:assumptions}, leading to the following corollary.

\begin{corollary}[Linear convergence of distances]\label{cor:dist-linear}
Under the assumptions of Theorem~\ref{thm:curv-zero}, for any edge \(h\!\to\!t\), the distance sequence \(\{d_{ht}^k\}_{k\ge0}\) generated by the update rule~\eqref{eq:dis_up} can be rewritten as 
\begin{equation}
\small
    d_{ht}^{k+1}
= d_{ht}^{k}
-\eta_g^{k}\,\nabla_d \ell\!\bigl(d_{ht}^{k}\bigr)
+\tfrac12\,\kappa_{ht}^{k}, \; \eta_g^{k}:=\frac{\beta}{2\,w_{ht}^{k}}.
\label{eq:dis_updata_new}
\end{equation}
Assume that loss \(\ell\) is \(\mu\)-strongly convex in \(d\) and that the coupling coefficient \(\beta\) satisfies \(0<\beta<\min\!\Bigl\{\tfrac{\lambda_1}{\mathcal{F}_W^{2}\sqrt{K_0}},\tfrac{4\,e^{-D}}{\mu}\Bigr\}.\) Then there exists a contraction factor \(q=\underset{k}{\sup}\left | 1- \mu\eta_g^{k}\right | \in(0,1)\) such that, for every \(k\ge0\) and any \(\epsilon_d\ge0\), it holds that \(\bigl|d_{ht}^{k}-d_{ht}^{\star}\bigr|
\le
q^{k}\bigl|d_{ht}^{0}-d_{ht}^{\star}\bigr|
+
\tfrac12\sum_{i=0}^{k-1} q^{\,k-1-i}\,\bigl|\kappa_{ht}^{i}\bigr|
\le \epsilon_d.\) Consequently, due to \(\sum_{i=0}^{\infty}\!|\kappa_{ht}^{i}|<\infty\), the sequence \(\{d_{ht}^{k}\}\) converges to \(d_{ht}^{\star}\) with linear rate.
\end{corollary}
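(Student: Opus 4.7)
The plan is to view the update in Eq.~\ref{eq:dis_updata_new} as a perturbed one-dimensional gradient descent and to combine a contraction estimate with the absolute summability of $\{|\kappa_{ht}^{k}|\}$ supplied by Theorem~\ref{thm:curv-zero}.

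First, I would reduce to the error dynamics. Let $e_{ht}^{k}:=d_{ht}^{k}-d_{ht}^{\star}$. Since $\nabla_{d}\ell(d_{ht}^{\star})=0$ at the minimizer, subtracting this zero from the right-hand side of Eq.~\ref{eq:dis_updata_new} gives
\begin{equation*}
e_{ht}^{k+1}
= e_{ht}^{k}
-\eta_g^{k}\bigl(\nabla_d\ell(d_{ht}^{k})-\nabla_d\ell(d_{ht}^{\star})\bigr)
+\tfrac{1}{2}\kappa_{ht}^{k}.
\end{equation*}
By $\mu$-strong convexity of $\ell$, together with the additional regularity deferred to Appendix~\ref{app:assumptions}, the noiseless map $d\mapsto d-\eta\nabla_d\ell(d)$ is a one-dimensional contraction with factor $|1-\mu\eta|$, so that $|e_{ht}^{k+1}|\le|1-\mu\eta_g^{k}|\cdot|e_{ht}^{k}|+\tfrac12|\kappa_{ht}^{k}|$.

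Next, I would convert the hypothesis on $\beta$ into a uniform step-size bound. Assumption~\ref{asm:geom-analytic}(ii) yields $d_{ht}^{k}\le D$, hence $w_{ht}^{k}=e^{-d_{ht}^{k}}\in[e^{-D},1]$ and $\eta_g^{k}=\beta/(2w_{ht}^{k})\in[\beta/2,\beta e^{D}/2]$. The corollary's condition $\beta<4e^{-D}/\mu$ then forces $\mu\eta_g^{k}<2$ for every $k$, so $q:=\sup_{k}|1-\mu\eta_g^{k}|<1$ as required. Unrolling the one-step recursion by induction,
\begin{equation*}
|e_{ht}^{k}|\le q^{k}|e_{ht}^{0}|+\tfrac{1}{2}\sum_{i=0}^{k-1} q^{\,k-1-i}\,|\kappa_{ht}^{i}|,
\end{equation*}
which is precisely the stated bound. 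To upgrade this to linear convergence, I would invoke Theorem~\ref{thm:curv-zero}: in its discrete form the edge curvature decays geometrically, $|\kappa_{ht}^{k}|\le C'\rho^{k}$ for some $\rho\in(0,1)$, so $\sum_{k}|\kappa_{ht}^{k}|<\infty$. Splitting the convolution at $i=\lfloor k/2\rfloor$, each half is dominated by $\bigl(\max(q,\rho)\bigr)^{k/2}$ times an absolute constant, and linear convergence of $e_{ht}^{k}$ with rate $\max(q,\rho)$ follows.

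The main obstacle I expect is the translation of Theorem~\ref{thm:curv-zero}'s continuous-time exponential decay $e^{-C_r s/2}$ into a discrete geometric bound on $|\kappa_{ht}^{k}|$ that is genuinely less than one at every iterate and compatible with the contraction factor $q$; this requires tracking how $\Delta s=\tfrac12$ propagates through the discrete-to-continuous correspondence and the uniform weight bounds $w_{ht}^{k}\in[e^{-D},1]$. A secondary subtlety is that the clean contraction factor $|1-\mu\eta_g^{k}|$ tacitly uses a Lipschitz-smoothness property of $\nabla_d\ell$ that is not written out in the corollary; I would pin this down via the regularity conditions referenced in Appendix~\ref{app:assumptions} before the contraction claim can be safely invoked.
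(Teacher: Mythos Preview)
Your proposal is essentially the same as the paper's proof in Appendix~\ref{app:dis_convergence}: both reduce to the error $d_{ht}^k-d_{ht}^\star$, apply $\mu$-strong convexity to obtain a per-step contraction factor $q<1$ from the weight bounds $w_{ht}^k\in[e^{-D},1]$, unroll the recursion, and then invoke the summability of $|\kappa_{ht}^k|$ from Theorem~\ref{thm:curv-zero}. Your convolution-splitting argument for the explicit linear rate and your flag about needing Lipschitz smoothness of $\nabla_d\ell$ are both sharper than the paper, which tacitly assumes the latter and only argues $|d_{ht}^k-d_{ht}^\star|\to 0$ via an $\varepsilon$-tail split rather than exhibiting the rate.
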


The implications of the corollary can be understood from geometric and optimization perspectives. Geometrically, the distance-convergence result clarifies how the curvature-flattening phase triggered by discrete Ricci flow gradually hands control to the ordinary gradient optimization. Because Theorem~\ref{thm:curv-zero} guarantees \(\kappa_{ht}^{k}\!\to 0\) and \(\sum_{k}|\kappa_{ht}^{k}|<\infty\), the curvature term in the distance update~\eqref{eq:dis_updata_new} acts only as a short-lived geometric correction; after finitely many iterations \(T(\varepsilon)\) its influence disappears and the rule reduces to convex optimization on an almost flat manifold. Thus, Ricci flow first irons out heterogeneous curvature and then lets the gradient step refine distances, realizing a coherent transition from geometry-driven smoothing to Euclidean fine-tuning. From an optimization point of view, the residual \(\kappa_{ht}^{k}\) behaves as a disappearing perturbation to distance convergence. Once \(\kappa_{ht}^{k}\) decays, the step size stabilizes at \(\eta_g^{k}\approx\beta/(2w_{ht}^{\star})\) and the iteration becomes \(d_{ht}^{k+1}\approx d_{ht}^{k}-\eta_g^{\star}\nabla_d L(d_{ht}^{k})\); the corollary then provides a linear contraction factor \(q\in(0,1)\). These findings show that discrete Ricci flow and gradient flow reinforce, rather than conflict with, each other: curvature evolution supplies global geometric regularization, while gradient descent secures the \(O(q^{k})\) convergence rate. Consequently, RicciKGE offers the first local-curvature-aware KGE framework with guaranteed linear convergence on a dynamically flattening manifold.

\begin{table}[t]
\small
  \caption{The comparison of typical KGE methods with application of our curvature evolution to the link prediction task using different types of distance function. W., Y., and F. represent datasets of WN18RR, YAGO3-10, and FB15k-237.}
  \label{tab:link_prediction}
  \centering
\setlength{\tabcolsep}{4pt}
  \begin{tabular}{cc|cc|cc|cc|cc|cc}
  
\hline \hline
\multicolumn{2}{c|}{Model}                             & TransE & +Ours & DistMult & +Ours & RotatE& +Ours & AttH & +Ours & GoldE & +Ours\\ \hline
\multicolumn{2}{c|}{Distance}                          & \multicolumn{2}{c|}{\(\small \bigl\lVert \mathbf{h} + \mathbf{r} - \mathbf{t} \bigr\rVert_{2}\)}                     & \multicolumn{2}{c|}{\( \small \bigl\langle \mathbf{h},\,\mathrm{diag}(\mathbf{r}),\,\mathbf{t}\bigr\rangle\)}                         & \multicolumn{2}{c|}{\(\small \bigl\lVert \mathbf{h}\circ \mathbf{r} -\mathbf{t}\bigr\rVert_{2}\)}         & \multicolumn{2}{c|}{ \( \small \bigl\lVert \mathbf{h}\oplus_{\mathbf{c}} \mathbf{r} \ominus_{\mathbf{c}} \mathbf{t} \bigr\rVert_{\mathbb{H}}\)} & \multicolumn{2}{c}{\(\small \bigl\lVert \mathbf{h}\circ \mathbf{r} - \mathbf{t}\bigr\rVert_{\mathcal{L}}^{2}\)}                               \\ \hline \hline

\multicolumn{12}{c}{Low embedding dimension $32$} \\ \hline

\multicolumn{1}{c|}{W.}    & MRR  & 7.0   & \textbf{7.6}           & 39.1    & \textbf{40.2}         & 30.3  & \textbf{31.5}  &  44.4 & \textbf{46.2} & 47.1  &\textbf{47.6}   \\
\multicolumn{1}{c|}{}                           & H@1  & 0.1   & \textbf{0.2}     & 36.6    & \textbf{37.4}     & 27.6  & \textbf{30.1} &41.0 &  \textbf{42.2}	&  	41.1 & \textbf{41.8}     \\
\multicolumn{1}{c|}{}                           & H@3  & 11.6  & \textbf{12.9}   & 40.1    & \textbf{41.3}        & 32.1  & \textbf{32.2} &  45.7	& \textbf{47.6} & 50.1 & \textbf{50.6} \\
\multicolumn{1}{c|}{}                           & H@10 & 18.1  & \textbf{19.4}           & 43.6    & \textbf{45.4}         & 34.6  & \textbf{34.8} &50.4 &\textbf{53.9} & 58.0 & 57.8
\\ \hline

\multicolumn{1}{c|}{F.}    & MRR  & 29.3 &	\textbf{31.0} &	28.5 &	\textbf{28.9} &	28.5 &	\textbf{28.8} &	31.4 &	\textbf{31.9} &	32.4 &	\textbf{32.7}
   \\
\multicolumn{1}{c|}{}                           & H@1 & 20.7 &	\textbf{22.1} &	20.2 &	\textbf{20.8}&	20.4&	\textbf{20.7}&	22.4&	\textbf{23.0}&	23.7&	\textbf{24.1}
    \\
\multicolumn{1}{c|}{}                           & H@3  & 32.2&	\textbf{34.3}&	31.1&	\textbf{31.6}&	31.1&	\textbf{31.4}&	34.4&	\textbf{35.1}&	35.4&	\textbf{35.7}
 \\
\multicolumn{1}{c|}{}                           & H@10 & 46.7&	\textbf{49.0}&	45.3&	\textbf{45.5}&	44.7&	\textbf{45.1}&	49.5&	\textbf{49.8}&	49.7&	\textbf{50.1}

\\ \hline \hline

\multicolumn{12}{c}{Medium embedding dimension $128$} \\ \hline
\multicolumn{1}{c|}{Y.}    & MRR  & 30.4    & \textbf{33.9}         & 17.3    &  \textbf{18.0}        & 39.4  & \textbf{40.6}    & 36.7  & \textbf{37.4}         & 39.5  & \textbf{40.8}         \\
\multicolumn{1}{c|}{}                           & H@1 & 20.4    & \textbf{23.3}  & 10.8    & \textbf{11.6}         & 29.7 & \textbf{30.0}  & 25.7  & \textbf{26.5}  & 30.0  & \textbf{30.8} 
\\
\multicolumn{1}{c|}{}                           & H@3  & 35.4 & \textbf{39.7}    & 18.7     & \textbf{19.1}        & 44.4   & \textbf{46.2}   & 41.7  & \textbf{42.5} & 43.3  & \textbf{46.6}    \\
\multicolumn{1}{c|}{}                           & H@10 & 49.4  & \textbf{53.4}   & 29.9    & \textbf{30.5}         & 57.9 & \textbf{60.3}  & 58.7  & \textbf{59.2}  & 58.2  & \textbf{59.8} \\ \hline

\multicolumn{1}{c|}{F.}    & MRR  & 29.6&	\textbf{31.2}&	28.8&	\textbf{28.9}&	28.6&	\textbf{28.9}&	33.6&	\textbf{34.2}&	33.4&	\textbf{33.7}
              \\
\multicolumn{1}{c|}{}                           & H@1 & 20.5&	\textbf{22.3}&	20.5&	\textbf{20.8}&	20.4&	\textbf{20.9}&	24.4&	\textbf{24.8}&	24.5&	\textbf{24.9}
 
\\
\multicolumn{1}{c|}{}                           & H@3  & 32.8&	\textbf{35.3}&	31.4&	31.2&	31.0&	\textbf{31.4}&	36.8&	\textbf{37.7}&	36.7&	\textbf{36.9}
  \\
\multicolumn{1}{c|}{}                           & H@10 & 47.8&	\textbf{50.3}&	45.5&	\textbf{45.8}&	45.1&	\textbf{45.5}&	52.0&	\textbf{53.1}&	51.4&	\textbf{51.5}
\\ \hline \hline

\multicolumn{12}{c}{High embedding dimension $256$} \\ \hline

\multicolumn{1}{c|}{F.}    & MRR  &30.2&	\textbf{31.7}&	29.0&	\textbf{29.7}&	29.2&	29.1&	33.8&	\textbf{34.3}&	34.1&	\textbf{34.2}

\\
\multicolumn{1}{c|}{}                           & H@1  & 21.1&	\textbf{21.7}&	20.9&	\textbf{21.6}&	21.2&	21.1&	24.5&	\textbf{25.0}&	25.1&	\textbf{25.2}
      \\
\multicolumn{1}{c|}{}                           & H@3  & 33.3&	\textbf{35.9}&	32.0&	\textbf{32.2}&	31.7&	31.5&	37.0&	\textbf{37.8}&	37.3&	37.3
       \\
\multicolumn{1}{c|}{}                           & H@10 & 48.5&	\textbf{50.9}&	45.8&	\textbf{46.0}&	45.4&	45.4&	52.6&	\textbf{53.2}&	52.0&	\textbf{52.1}
      \\ \hline

\multicolumn{12}{c}{High embedding dimension $1000$} \\ \hline

\multicolumn{1}{c|}{W.}    & MRR  & 22.4   & \textbf{22.8}          & 43.8   & \textbf{44.2}         & 47.5  & \textbf{48.1}   & 46.6   & \textbf{46.8}         & 51.3  & \textbf{51.6}  
\\
\multicolumn{1}{c|}{}                           & H@1  & 1.4  &  \textbf{1.7}  &39.3     &\textbf{39.9}            & 42.7  & \textbf{43.7}    & 40.8   & \textbf{41.1}         & 46.3  & \textbf{46.9}        \\
\multicolumn{1}{c|}{}                           & H@3  & 40.2 & \textbf{40.6}    & 45.0    & \textbf{45.6}     & 49.4   &  \textbf{49.7}     & 49.3   & \textbf{49.5}         & 53.3  & \textbf{53.4}      \\
\multicolumn{1}{c|}{}                           & H@10 & 53.0  & \textbf{53.4}   &   53.4  & \textbf{53.6}         & 57.3  & \textbf{58.0}      & 57.4   & \textbf{57.5}         & 60.7  & \textbf{60.9}       \\ \hline \hline

\end{tabular}
\end{table}
\textbf{Complexity considerations.} 
While the theoretical results above establish convergence, it is equally important to assess the computational overhead introduced by RicciKGE. Each epoch consists of two additional components beyond standard KGE updates. 
First, edge-wise curvature estimation involves computing a Sinkhorn distance between neighborhood distributions, with per-edge complexity $O(k^2 + k d)$ for average degree $k$ and embedding dimension $d$, yielding a total cost $O(|E|(k^2 + k d))$ per epoch. 
Second, Ricci-guided edge reweighting is realized through a lightweight Lagrangian update, costing $O(d)$ per edge, or $O(|E|d)$ overall. 
By contrast, classical KGE models such as TransE, DistMult, or RotatE typically require only $O(d)$ operations per triple. 
Thus, RicciKGE introduces an extra $O(k^2)$ term stemming from curvature estimation. 
However, this step is performed only once per epoch, is fully parallelizable, and in practice remains manageable since knowledge graphs are generally sparse ($k \ll d$). 
The added overhead is often offset by faster convergence in terms of both epochs and wall-clock time (see Appendix~\ref{app:efficiency}), which makes RicciKGE computationally feasible for large-scale benchmarks.

\section{Experiments}
\label{sec:experiment}

\textbf{Datasets.} 
We evaluate our method on two types of graph representation tasks. 
For link prediction, which remains our core focus, we use three standard knowledge graph benchmarks with their canonical train/validation/test splits: 
WN18RR~\citep{dettmers2018convolutional}, FB15K-237~\citep{toutanova-chen-2015-observed}, and YAGO3-10~\citep{mahdisoltani2013yago3}. 
In addition, we include node classification to test the robustness of curvature–gradient co-evolution beyond multi-relational KGs and to empirically support the general convergence discussed in section~\ref{sec:Curvature_Convergence}. 
For this purpose, we consider five large-scale graphs: Roman-Empire and Tolokers from the Heterophilous Graph Benchmark~\citep{platonov2023critical}, Cora\_Full and Pubmed from CitationFull benchmark~\citep{bojchevski2017deep}, and OGBN-Arxiv from the Open Graph Benchmark~\citep{hu2020open}. 
These datasets cover diverse graph structures and heterogeneous local curvature, providing complementary evidence for the generality of RicciKGE.

\textbf{Baselines} To comprehensively assess the effectiveness of our method on knowledge graph embedding (KGE), we compare against representative models from different families of distance function: TransE~\citep{bordes2013translating} with translational distance, DistMult~\citep{yang2014embedding} with bilinear inner product, RotatE~\citep{sun2019rotate} with relation-specific rotation, AttH~\citep{chami2020low} with hyperbolic attention-based embeddings, and GoldE~\citep{li2024generalizing} with universal orthogonal parameterization to generalize across product and mixed-curvature geometries. For node classification on attributed graphs, we benchmark against the classical GCN~\citep{kipf2016semi}, GAT~\citep{velivckovic2017graph}, and the recent curvature-driven model GNRF~\citep{chen2025graph}, where node embeddings evolve solely via the Ricci flow. More details on the experimental setting can be found in the Appendix~\ref{app:more_exp}.

\begin{figure}
  \centering
  \begin{subfigure}[b]{0.74\textwidth}
    \centering
    \includegraphics[width=\linewidth]{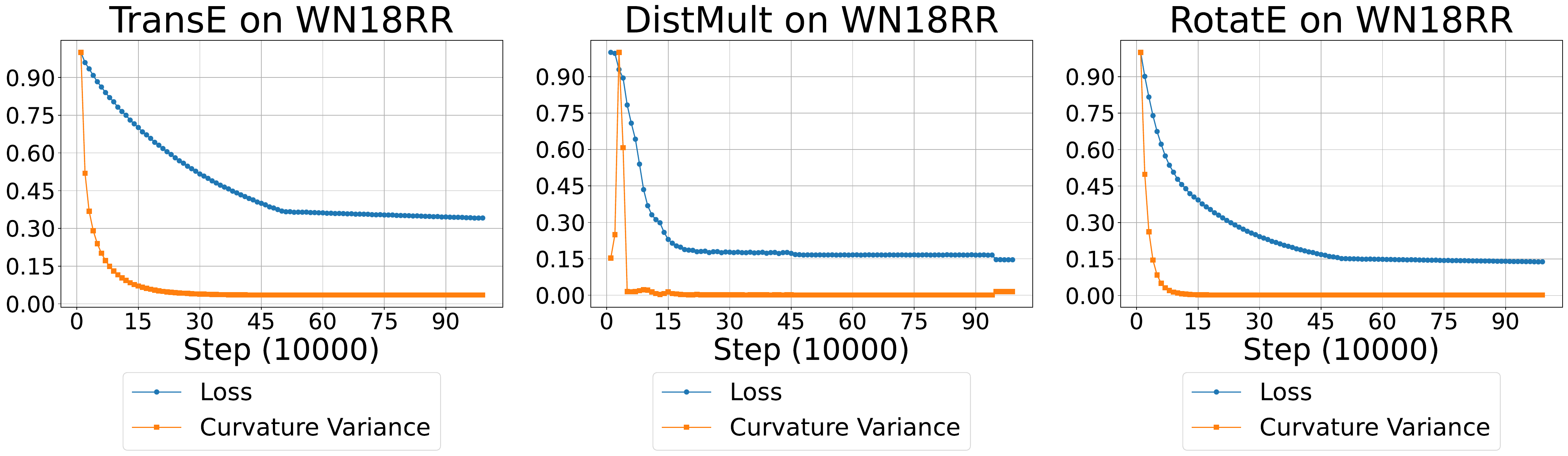}
    \caption{Loss and curvature variance  curve}
    \label{fig:loss_curve}
  \end{subfigure}
  \hfill
  \begin{subfigure}[b]{0.24\textwidth}
    \centering
    \includegraphics[width=\linewidth]{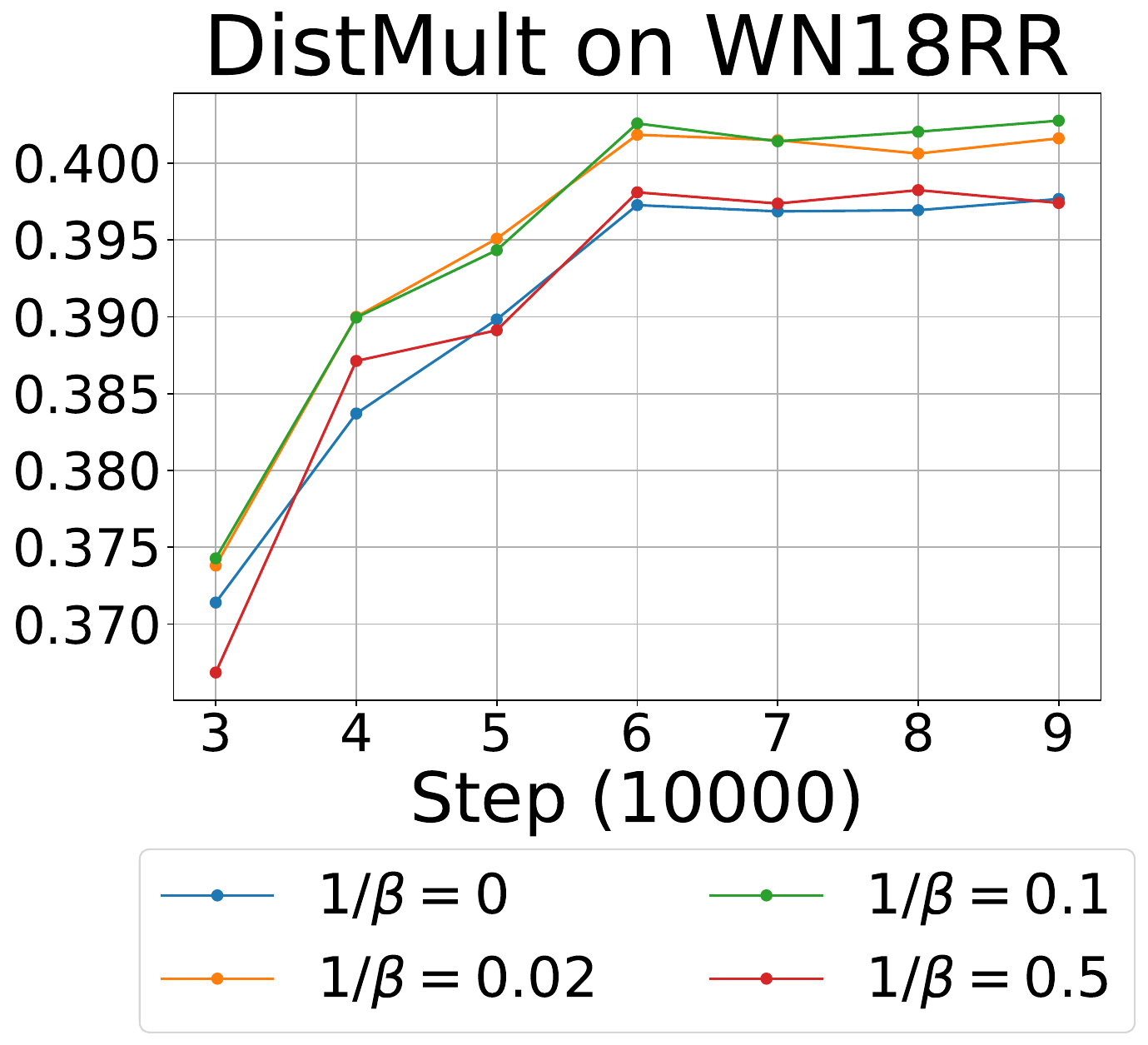}
    \caption{Different \(\beta\)}
    \label{fig:beta}
  \end{subfigure}
  \vspace{-0.2cm}
  \caption{(a) Normalized loss and curvature variance curves on WN18RR (embedding dimension = 32). Curvature variance converges faster than loss, with both eventually stabilizing. (b) DistMult performance on WN18RR under varying coupling coefficients, showing that overly small or large values degrade effectiveness.}
  \label{fig:two-images}
  \vspace{-0.2cm}
\end{figure}

\begin{table}
\small
  \caption{Comparison of our methods with classical GNNs and SOTA Ricci-flow GRNF on the node classification task.}
  \label{tab:node_classification}
  \centering
  \begin{tabular}{c|c|c|c|c|c}
\hline \hline
     & Roman-Empire        & Tolokers            & Core\_Full          & Pubmed              & OBGN-Arxiv          \\ \hline 
GCN,  & 71.23±0.22          & 79.61±0.66          & 68.06±0.98          & 86.74±0.47          & 66.34±0.76               \\ 
GAT  & 77.40±1.53          & 81.45±0.92          & 67.55±1.23          & 87.24±0.55          & 64.52±0.97                  \\ \hline 
GNRF & 85.64±0.58          & 81.92±0.98          & 72.14±0.61          & 89.97±0.36          & 66.07±0.88          \\ 
Ours & \textbf{87.40±0.54} & \textbf{82.54±0.87} & \textbf{72.51±0.53} & \textbf{90.57±0.42} & \textbf{67.25±0.76} \\ \hline \hline

\end{tabular}
\vspace{-0.5cm}
\end{table}

\textbf{Main results} The comparison results are reported in Table~\ref{tab:link_prediction} and Table~\ref{tab:node_classification}. 
For link prediction, RicciKGE consistently improves performance across all KGE models with different distance functions and embedding dimensions ($32$, $128$, $256$, $1000$). For example, on WN18RR (dim=$32$), RicciKGE improves MRR by $+0.55$, $+1.08$, and $+1.17$ for TransE, DistMult, and RotatE, respectively. On FB15K-237 (dim=$128$), it yields relative gains of $+0.16$ for DistMult, $+0.42$ for RotatE, and $+0.76$ for AttH. Even on strong recent baselines such as AttH and GoldE, RicciKGE provides additional gains, demonstrating its ability to adaptively flatten heterogeneous curvature beyond what static or pre-defined manifolds can capture. Although there are a very few cases where RicciKGE does not surpass the best baseline, the overall trend of consistent improvements across datasets and models remains clear, confirming that coupling Ricci flow with KGE training is beneficial in practice.
For node classification, RicciKGE outperforms classical GNNs and the SOTA Ricci flow model GNRF on all datasets. For instance, it achieves $87.40\%$ accuracy on Roman-Empire versus $85.64\%$ from GNRF. This performance gap confirms that coupling curvature evolution with task gradients yields more effective representations than unsupervised geometric smoothing. By injecting task-aware feedback into the flow, RicciKGE aligns local geometry with global semantic objectives, resulting in more adaptive and discriminative embeddings.

\textbf{Convergence} Fig.~\ref{fig:loss_curve} provides direct empirical evidence for our theoretical claim: both curvature variance and loss exhibit stable convergence, confirming the co-evolution of geometry and optimization. The faster convergence of curvature indicates that geometric regularization is completed earlier, which implies minimal perturbations introduced to gradient-based updates, and thus decoupled yet coordinated training dynamics.

\textbf{Coupling coefficient} Fig.~\ref{fig:beta} shows that the coupling coefficient significantly affects performance: small \(\beta\) values underutilize curvature guidance, while overly large \(\beta\) values distort gradient flow and degrade results. This agrees with our theory, which requires the coefficient to lie within a bounded range to ensure balanced co-evolution. 

\section{Conclusion}
\label{sec:conclusion}
We address the geometric mismatch between predefined embedding manifolds and local heterogeneous structures of real-world knowledge graphs. We propose RicciKGE, a unified framework that couples KGE gradients with discrete Ricci curvature via an extended Ricci flow, enabling dynamic co-evolution of latent geometry and embeddings. Theoretically, we prove that RicciKGE drives all edge-wise curvature to zero, flattening the manifold toward Euclidean geometry, and guarantees convergence of edge distances from the embedding optimization. Empirically, RicciKGE achieves universal improvements in link prediction and node classification tasks.

\textbf{Limitations and future work}
Our theoretically assumes a specific distance function of the form \( d(f_r(E(h)), E(t)) \), which limits the types of KGE models. This excludes models that use more complex or non-distance-based scoring functions. An interesting future work would be relaxing this assumption and extending our framework to more embedding models.


\bibliography{iclr2026_conference}
\bibliographystyle{iclr2026_conference}

\appendix

\section*{Appendix Contents}
\begin{enumerate}
    \item Notations \dotfill ~\ref{app:notation}
    \item Related work \dotfill ~\ref{app:related_work}
    \item Curvature and gradient coupling flow-driven embedding evolution \dotfill ~\ref{app:embedding evolution}
    \item Curvature convergence \dotfill ~\ref{app:curv_convergence}
    \item Distance convergence \dotfill ~\ref{app:dis_convergence}
    \item Details of experiments \dotfill ~\ref{app:more_exp}
    \item Usage of LLM \dotfill ~\ref{app:LLM_use}
\end{enumerate}

\section{Notations}
\label{app:notation}
For clarity, we summarize the main mathematical symbols used in our paper. We separate the notation into two parts: (i) symbols introduced in the \textbf{preliminaries and methods} (basic definitions, KGE modeling, curvature evolution, embedding update), and (ii) symbols specific to the \textbf{convergence analysis} (theoretical assumptions, curvature energy, contraction rates).  This separation ensures that readers can easily locate the meaning of each symbol according to the section where it first appears.

Table~\ref{tab:notation-methods} collects the symbols appearing in Sections 3.1 and 3.2 of the main text, including knowledge graph definitions, curvature-related terms, and the update rules in RicciKGE.

Table~\ref{tab:notation-convergence} collects the additional symbols appearing in Section~3.3 of the main text, where we establish the convergence guarantees. These symbols are mainly related to Riemannian geometry, Ricci flow dynamics, curvature energy, and contraction analysis.

\begin{table}[h]
\centering
\caption{Notation used in the preliminaries and methods.}
\label{tab:notation-methods}
\begin{tabular}{c|l|c}
\hline \hline
Symbol & Meaning & Section \\ \hline
$\mathcal{G}=(\mathcal{V},\mathcal{R},\mathcal{T})$ & Knowledge graph (entities, relations, triples, edges) & Preliminaries \\
$\mathcal{V}$ & Set of entities (nodes) & Preliminaries \\
$\mathcal{R}$ & Set of relations & Preliminaries \\
$\mathcal{T}$ & Edge set of factual triples $(h,r,t)$ & Preliminaries \\
$\mathbf{E}(\mathcal{V})\in\mathbb{R}^{|\mathcal{V}|\times d}$ & Entity embedding function & Preliminaries \\
$f_r(\cdot)$ & Relation-specific mapping function & Preliminaries \\
$h,r,t$ & Head entity, relation, tail entity of a triple & Preliminaries \\
$i,j$ & Coordinate indices & Preliminaries \\
$\delta_{ij}$ & Kronecker delta & Preliminaries \\
$N(i)$ & Neighbor set of node $i$ & Preliminaries \\
$z$ & Neighbor node of $i$ & Preliminaries \\
$w_{iz}$ & Edge weight between $i$ and $z$ & Preliminaries \\
$\delta_i(\cdot)$ & Dirac measure centered at node $i$ & Preliminaries \\
\hline
$d_{ht}$ & Distance between $h$ and $t$ under relation $r$ & Method \\
$w_{ht}=\exp(-d_{ht})$ & Edge weight for $(h,t)$ & Method \\
$\kappa_{ht}$ & Ricci curvature on edge $(h,t)$ & Method \\
$\beta$ & Coupling coefficient (gradient $\leftrightarrow$ curvature flow) & Method \\
$\eta_g$ & Gradient-based adaptive step size & Method \\
$\ell(d)$ & Loss function based on distance & Method \\
$g=\nabla_d \ell(d_{ht})$ & Gradient of loss w.r.t.\ distance & Method \\
$d_{ht}^{k+1}$ & Updated distance after Ricci flow & Method \\
$\delta_d$ & Distance update magnitude & Method \\
$a$ & Gradient direction w.r.t.\ head entity & Method \\
$b$ & Gradient direction w.r.t.\ tail entity & Method \\
$\lambda^k$ & Update multiplier for embeddings at iteration $k$ & Method \\
$E^{(k)}$ & Updated embedding after iteration $k$ & Method \\
$\ind(\cdot)$ & Indicator function (head/tail selection) & Method \\
$k$ & Iteration index of Ricci flow update & Method \\
\hline \hline

\end{tabular}
\end{table}

\begin{table}[h]
\centering
\caption{Notation used in the convergence analysis. The last column indicates where the symbol first appears (Assump./Def./Thm./Cor.).}
\label{tab:notation-convergence}
\begin{tabular}{c|l|c}
\hline \hline
Symbol & Meaning & Where \\ \hline
$(\mathcal{M}, g(s))$ & Time-evolving Riemannian manifold and metric & Assump. \\
$\mathrm{Vol}(\mathcal{M}),\, V_0$ & Volume of $\mathcal{M}$ and its positive lower bound & Assump. \\
$\mathrm{diam}(\mathcal{M}),\, D$ & Diameter of $\mathcal{M}$ and its finite upper bound & Assump. \\
$C_\tau$ & Sobolev inequality constant on $\mathcal{M}$ & Assump. \\
$n=\dim \mathcal{M}$ & Dimension of the manifold & Assump. \\
$\nabla,\, \Delta$ & Gradient and (weighted) Laplacian on $\mathcal{M}$ & Assump. \\
$\lambda_1$ & Spectral gap (first non-zero eigenvalue of $-\Delta$) & Assump. \\
$T$ & Smooth tensor field on $\mathcal{M}$ & Assump. \\
$\int_{\mathcal{M}}\!\cdot\, dV,\ dV_{g(s)}$ & Integration over $\mathcal{M}$; volume element & Assump./Thm. \\
$\mathcal{P}_2(\mathcal{M})$ & 2-Wasserstein space of probability measures on $\mathcal{M}$ & Def. \\
$\rho,\, \rho_1,\, \rho_2$ & Probability measures in $\mathcal{P}_2(\mathcal{M})$ & Def. \\
$\nabla_W \mathcal{F}(\rho)$ & Wasserstein gradient of functional $\mathcal{F}$ at $\rho$ & Def. \\
$W_2(\rho_1,\rho_2)$ & 2-Wasserstein distance between $\rho_1$ and $\rho_2$ & Def. \\
$\|\cdot\|_{L^2(\mathcal{M})},\, \|\cdot\|_{L^\infty}$ & $L^2$ / $L^\infty$ norms on $\mathcal{M}$ & Assump./Thm. \\
$\mathrm{Ric}$, $\|\mathrm{Ric}\|$, $\nabla \mathrm{Ric}$ & Ricci tensor, its HS-norm, and its gradient & Thm./Sketch \\
$R(s)$ & Curvature energy $R(s):=\int_{\mathcal{M}}\|\mathrm{Ric}(g(s))\|^2\, dV_{g(s)}$ & Thm. \\
$K_0$ & Initial bound on curvature energy $R(0)\le K_0$ & Thm. \\
$\beta$ & Coupling coefficient in extended Ricci flow & Thm./Cor. \\
$F_W$ & Wasserstein–gradient Lipschitz constant of the loss & Def./Thm. \\
$\kappa_{ht}(s)$ & (Discrete) Ricci curvature on edge $(h,t)$ at time $s$ & Thm. \\
$\max_{h,t}|\kappa_{ht}(s)|$ & Uniform edgewise curvature magnitude & Thm. \\
$C$ & Absolute constant appearing in decay-time bound & Thm./Sketch \\
$C_r$ & Decay rate constant $C_r = 2\lambda_1 - 2\beta F_W^2 \sqrt{K_0}$ & Sketch \\
$S(\varepsilon)$ & Entrance time into $\varepsilon$-flatness neighborhood & Thm. \\
$w_{ht}$ & Edge weight on $(h,t)$; bounded in $[e^{-D},1]$ & Sketch \\
$L=\ell(d)$, $\ell(\cdot)$ & Loss as a convex function of distance & Cor. \\
$\mu$ & Strong convexity parameter of $\ell(d)$ in $d$ & Cor. \\
$d^k_{ht}$, $d^\star_{ht}$ & Distance at iteration $k$ and its optimum & Cor. \\
$\eta_g^k$ & Step size $\eta_g^k := \beta/(2 w_{ht}^k)$ & Cor. \\
$\kappa^k_{ht}$ & Discrete curvature on $(h,t)$ at iteration $k$ & Cor. \\
$q$ & Contraction factor $q=\sup_k |1-\mu \eta_g^k|\in(0,1)$ & Cor. \\
$\varepsilon_d$ & Desired tolerance for distance error & Cor. \\
$T(\varepsilon)$ & Iteration/time after which curvature term is negligible & Cor./Text \\
\hline \hline
\end{tabular}
\end{table}

\section{Related work}
\label{app:related_work}
\paragraph{Manifold-based knowledge graph embedding}
\label{related_mKGE}
Early manifold-based KGE methods utilized Euclidean space-based Cartesian coordinates, with models like TransE~\citep{bordes2013translating} and RotatE~\citep{sun2019rotate} capturing simple relational patterns through translation and rotation, while PairRE~\citep{chao2020pairre} introduced more complex transformations. Polar coordinate-based methods, such as HAKE~\citep{zhang2020learning}, and HBE~\citep{pan2021hyperbolic}, employed radial, angular, and Möbius transformations to capture hierarchical structures better. To embed hierarchical structures, hyperbolic models, including MuRP~\citep{balazevic2019multi}, ATTH~\citep{chami2020low}, RotH~\citep{chami2020low}, and RotL~\citep{wang2021hyperbolic}, leveraged the Poincaré ball and hyperbolic isometries for embeddings. In parallel, spherical models, such as TransC~\citep{lv2018differentiating}, HypersphereE~\citep{dong2021hypersphere}, and ManifoldE~\citep{xiao2015one}, utilized positive curvature to capture hierarchical and cyclic relations.

To address the limitations of single-geometry approaches in representing diverse knowledge graph (KG) structures, multi-space methods combine multiple geometric spaces or leverage dynamic mechanisms. Product space-based methods, such as UltraE~\citep{xiong2022ultrahyperbolic}, HypHKGE~\citep{zheng2022hyperbolic}, GIE~\citep{cao2022geometry}, and GoldE ~\citep{li2024generalizing} capture heterogeneous structures by integrating hyperbolic, spherical, and Euclidean spaces, while fusion-based approaches like MCKG~\citep{yuan2023knowledge} and UniGE~\citep{liu2024bridging} unify embeddings using adaptive optimization and transport techniques. Dynamic methods like DyERNIE~\citep{han2020dyernie} and ODE-based models~\citep{nayyeri2020motif} represent temporal and continuous relational patterns across geometries, and hybrid approaches like CurvRec~\citep{wang2023mixed} integrate mixed-curvature manifolds with Ricci-enhanced graph networks to model local and global properties effectively. Nevertheless, these multi-space methods still rely on rigid, prior-defined manifold geometries, restricting their ability to flexibly adapt to the heterogeneous and evolving structures. Our work attempts to break this limitation by introducing dynamic curvature evolution of the embedding space.

\paragraph{Ricci flow in graph representation}
\label{related_ricciflow}
Ricci flow has emerged as a powerful tool for geometric regularization in graph learning, evolving edge weights to flatten curvature and mitigate structural irregularities. Early approaches applied static Ricci curvature for graph rewiring to alleviate over-smoothing and Over-squashing~\citep{liu2023curvdrop, sun2023deepricci}. RicciNet~\citep{sun2024riccinet} and Tian et al.~\citep{tian2025curvature} successfully applied the ricci flow for deep clustering. Fu et al.~\citep{fu2025discrete} integrate Ricci flow with information bottleneck principles to guide task-relevant information transport. Chen et al.~\citep{chen2024learning} used Ricci flow-based differential equations to analyze the learning behavior of discretized neural networks. Recently, GNRF~\citep{chen2025graph} and Ironing~\citep{naama2024ironing} implemented continuous and discrete Ricci flow to dynamically evolve graph geometry, with theoretical guarantees on curvature convergence and energy bounds. These advances motivate our investigation into the Ricci flow as a local-curvature-aware inductive bias for embedding knowledge graphs with heterogeneous relational structures.

\section{Curvature and gradient coupling flow-driven embedding evolution}
\label{app:embedding evolution}
\subsection{Objective and constraint}
We aim to update the entity embeddings by minimizing their perturbation while ensuring consistency with Ricci flow–induced changes in metric structure. This leads to a constrained optimization question. For a given triple $(h, r, t)$ in iteration $k$, we define:

\begin{equation}
\begin{aligned}
  & \delta_h^k &= E^{(k+1)}(h) - E^{(k)}(h), \\
& \delta_t^k &= E^{(k+1)}(t) - E^{(k)}(t),  
\end{aligned}
\label{eq:head_tail_update}
\end{equation}

where $E^{(k)}(h)$ and $E^{(k)}(t)$ are the embeddings of head and tail entities at iteration $k$. The goal is to minimize the overall perturbation magnitude:

\begin{equation}
\min_{\delta_h^k, \delta_t^k} \| \delta_h^k \|^2 + \| \delta_t^k \|^2.
\end{equation}

We require the update to respect the change in edge weight induced by the discrete extended Ricci flow as follows:
\begin{equation}
\langle a, \delta_h^k \rangle + \langle b, \delta_t^k \rangle = \delta_d^k,
\label{eq:constraint}
\end{equation}
where
\begin{align}
\nonumber
a &= \nabla_{E(h)} f_r(E(h))^\top \nabla_{f_r(h)} d, \\
\nonumber
b &= \nabla_{E(t)} d, \\
\nonumber
\delta_d^k &= \tfrac{1}{2} \kappa^k - \frac{\beta}{2 w^k} \langle a, b \rangle.
\end{align}

Here, $\kappa^k$ is the discrete Ricci curvature in iteration $k$, $\beta$ is the coupling coefficient between geometry and optimization, and $w^k = \exp(-d(f_r(E(h)), E(t)))$ is the edge weight defined via a soft exponential kernel over distance. The constraint reflects the Taylor expansion of the first order of the extended Ricci flow update rule applied to the edge $(h, r, t)$ and ensures that the change in evolved distance is consistent with the dynamics of the underlying curvature.

\subsection{Rigorous derivation of linear constraint}
Before solving this constrained optimization question, our aim is to present the constraint~\eqref{eq:constraint} by combining a first-order expansion of the edge weight with the theoretical Ricci flow update. Specifically, for a given triple $(h, r, t)$, the edge weight is defined by:
\begin{equation}
\nonumber
w^k := \exp\left( -d\left(f_r(E^{(k)}(h)), E^{(k)}(t)\right) \right),
\end{equation}
where $d(\cdot, \cdot)$ is a differentiable distance function, and $f_r(\cdot)$ is a relation-specific transformation. From the extended discrete Ricci flow update~\citep{naama2024ironing, list2008evolution, choudhury2024evolution}, the edge weight evolves as:
\begin{equation}
\nonumber
w^{k+1} = w^k \left( 1 - \tfrac{1}{2} \kappa^k \right) + \tfrac{\beta}{2} \langle a, b \rangle.
\label{eq:ricciflow-weight-update}
\end{equation}

We perturb the embeddings by Eq.~\eqref{eq:head_tail_update}:
\begin{equation}
\nonumber
E^{(k+1)}(h) = E^{(k)}(h) + \delta_h^k, \quad E^{(k+1)}(t) = E^{(k)}(t) + \delta_t^k.
\end{equation}
Then the updated edge weight becomes:
\[
w^{k+1} := \exp\left( -d\left(f_r(E^{(k)}(h) + \delta_h^k), E^{(k)}(t) + \delta_t^k\right) \right).
\]
The original distance function is defined as follows:
\[
d^k := d\left(f_r(E^{(k)}(h)), E^{(k)}(t)\right).
\]
Let $x := f_r(E^{(k)}(h))$ and $y := E^{(k)}(t)$, so that: \(d^k = d(x, y)\). We then perform a first-order expansion of the distance function under the perturbation:
\begin{equation}
\nonumber
    \begin{aligned}
        d^{k+1} &:= d\left(f_r(E^{(k)}(h) + \delta_h^k),\ E^{(k)}(t) + \delta_t^k\right) \\
&\approx d(x, y) + \left\langle \nabla_x d,\ \frac{\partial f_r}{\partial E(h)} \delta_h^k \right\rangle + \left\langle \nabla_y d,\ \delta_t^k \right\rangle \\
&= d^k + \langle a, \delta_h^k \rangle + \langle b, \delta_t^k \rangle,
    \end{aligned}
\end{equation}
where we have
\begin{align}
\nonumber
\nabla_x d := \nabla_{f_r(h)} d,  \quad
\nabla_y d := \nabla_{E(t)} d.
\end{align}

Thus, the distance change is:
\begin{align}
\delta_d^k =  d^{k+1} -d^{k}  = \langle a, \delta_h^k \rangle + \langle b, \delta_t^k \rangle.
\label{eq:constrained}
\end{align}

\subsection{Lagrangian formulation}
To solve the constrained optimization problem, we introduce a Lagrange multiplier $\lambda^k \in \mathbb{R}$ and incorporate the constraint into the objective function, which is similar as~\citep{chen2025graph}. The resulting Lagrangian is defined as:
\begin{equation}
\mathcal{L}(\delta_h^k, \delta_t^k, \lambda^k)
= \underbrace{\|\delta_h^k\|^2 + \|\delta_t^k\|^2}_{\text{embedding perturbation}}
+ \lambda^k \underbrace{\left( \langle a, \delta_h^k \rangle + \langle b, \delta_t^k \rangle - \delta_d^k \right)}_{\text{constraint residual}}.
\end{equation}
  
This Lagrangian combines the smoothness objective (minimizing embedding changes) with a curvature-induced constraint that aligns embedding updates with Ricci flow geometry.

\subsection{Solving Lagrangian via first-order optimality}

To find the stationary points of the Lagrangian, we compute the partial derivatives of $\mathcal{L}$ with respect to the variables $\delta_h^k$ and $\delta_t^k$.
We first compute the gradient of $\mathcal{L}$ with respect to $\delta_h^k$:

\begin{equation}
\nonumber
\frac{\partial \mathcal{L}}{\partial \delta_h^k} 
= \frac{\partial}{\partial \delta_h^k} \left( \langle \delta_h^k, \delta_h^k \rangle + \lambda^k \langle a, \delta_h^k \rangle \right)
= 2 \delta_h^k + \lambda^k a.
\end{equation}

Similarly, the gradient with respect to $\delta_t^k$ is:
\begin{equation}
\nonumber
\frac{\partial \mathcal{L}}{\partial \delta_t^k} 
= \frac{\partial}{\partial \delta_t^k} \left( \langle \delta_t^k, \delta_t^k \rangle + \lambda^k \langle b, \delta_t^k \rangle \right)
= 2 \delta_t^k + \lambda^k b.
\end{equation}
To obtain the optimal solution for each triple $(h, r, t)$, we set the partial derivatives to zero and solve for the perturbations:
\begin{equation}
\frac{\partial \mathcal{L}}{\partial \delta_h^k} = 0 
\quad \Rightarrow \quad 2 \delta_h^k + \lambda^k a = 0 
\quad \Rightarrow \quad 
 \delta_h^k = -\frac{\lambda^k}{2} a, 
 \label{eq:head_updata}
\end{equation}
\begin{equation}
\frac{\partial \mathcal{L}}{\partial \delta_t^k} = 0 
\quad \Rightarrow \quad 2 \delta_t^k + \lambda^k b = 0 
\quad \Rightarrow \quad 
\delta_t^k = -\frac{\lambda^k}{2} b.
 \label{eq:tail_updata}
\end{equation}

Substitute Eq.\eqref{eq:head_updata} and Eq.\eqref{eq:tail_updata} into the constraint~\eqref{eq:constrained}:

\begin{align}
\nonumber
&\langle a, \delta_h^k \rangle + \langle b, \delta_t^k \rangle = \delta_d^k \\
\nonumber
&\left\langle a, -\frac{\lambda^k}{2} a \right\rangle + \left\langle b, -\frac{\lambda^k}{2} b \right\rangle = \delta_d^k \\
\nonumber
&-\frac{\lambda^k}{2} \left( \|a\|^2 + \|b\|^2 \right) = \delta_d^k.
\end{align}

Solving for $\lambda^k$ gives:
\begin{equation}
\nonumber
\lambda^k = -\frac{2 \delta_d^k}{\|a\|^2 + \|b\|^2}.
\end{equation}

Next, we substitute the analytical form of $\delta_d^k$ into the expression for $\lambda^k$. Recall that under the extended discrete Ricci flow, the expected change in the logarithmic edge weight is given by:
\[
\delta_d^k = \tfrac{1}{2} \kappa^k - \frac{\beta}{2 w^k} \langle a, b \rangle.
\]
Substituting this into the expression for $\lambda^k$ derived from the constraint, we obtain:
\begin{equation}
    \begin{aligned}
        \lambda^k &= -\frac{2 \delta_d^k}{\|a\|^2 + \|b\|^2} 
= -\frac{2 \left( \tfrac{1}{2} \kappa^k - \frac{\beta}{2 w^k} \langle a, b \rangle \right)}{\|a\|^2 + \|b\|^2} 
= \frac{-\kappa^k + \dfrac{\beta}{w^k} \langle a, b \rangle}{\|a\|^2 + \|b\|^2} \\
&= -\frac{\displaystyle \kappa^k
               - \frac{\beta}{w^k}\,
                 \nabla_{f_r(h)}d^\top\,
                 \nabla_{E(h)}f_r\bigl(E(h)\bigr)\,
                 \nabla_{E(t)}d}
         {\displaystyle \bigl\|G_h^{1/2}\,\nabla_{f_r(h)}d\bigr\|^2
          + \bigl\|\nabla_{E(t)}d\bigr\|^2}\,,
    \end{aligned}
\label{eq:lambda_closed}
\end{equation}
where $G_h := \nabla_{E(h)} f_r(E(h)) \nabla_{E(h)} f_r(E(h))^\top$ is the local structure matrix capturing the transformation Jacobian of the head entity embedding. Here the numerator computes the interaction between curvature gradient and embedding geometry; the denominator corresponds to a geometry-weighted norm controlling curvature flow strength. The entity embedding updates are linearly aligned with their local geometry gradient directions, scaled by a curvature-aware Ricci update term.

\subsection{Aggregated embedding update}
We now derive the final form of the entity embedding update by aggregating all local updates over incident triples. Given a triple $(h, r, t)$, the optimal perturbations of the head~\eqref{eq:head_updata} and tail~\eqref{eq:tail_updata} embeddings are:
\begin{equation}
\nonumber
    \begin{aligned}
        \delta_h^k &= -\frac{\lambda^k}{2} \cdot \nabla_{E(h)} f_r(E(h))^\top \nabla_{f_r(h)} d \\
\delta_t^k &= -\frac{\lambda^k}{2} \cdot \nabla_{E(t)} d,
    \end{aligned}
\end{equation}
where $d = d(f_r(E(h)), E(t))$ is the distance function and $\lambda^k$ is the Lagrange multiplier. To compute the full update for an entity $v$, we aggregate contributions from all triples in which $v$ appears as either head or tail:
\begin{equation}
\Delta E^k(v) = \sum_{(h, r, t) \in \mathcal{T}_v} \lambda^k(h, t) 
\left(
\ind(v = h) \cdot \nabla_{E(h)} f_r(E(h))^\top \nabla_{f_r(h)} d
+ 
\ind(v = t) \cdot \nabla_{E(t)} d
\right).
\label{eq:agg_update}
\end{equation}
Here, $\mathcal{T}_v := \{(h, r, t) \in \mathcal{T} \mid v = h \text{ or } v = t\}$ is the set of all triples incident to $v$, $\ind(v = h)$ and $\ind(v = t)$ are indicator functions determining the role of $v$, $\lambda^k(h, t)$ is as given in Eq.\eqref{eq:lambda_closed}.

\section{Curvature convergence}
\label{app:curv_convergence}

\subsection{Definitions and notations: curvature energy and control objective}

To rigorously analyze the convergence behavior of the discrete Ricci-KGE flow, we begin by defining the curvature energy that monitors the flattening progress, clarify the norm conventions for tensor fields, and restate the control target under the geometric–analytic assumptions of the system.

\paragraph{Curvature energy functional.}

Let $(\mathcal{M}, g(s))$ be a family of Riemannian manifolds with evolving metrics governed by the discrete Ricci-KGE flow. We define the total Ricci curvature energy at time $s$ as:
\begin{equation}
R(s) := \int_{\mathcal{M}} \| \mathrm{Ric}(g(s)) \|^2 \, dV_{g(s)},
\label{eq:curvature_energy}
\end{equation}
where 1) $\mathrm{Ric}(g(s))$ is the Ricci curvature tensor of the metric $g(s)$; 2)  $dV_{g(s)}$ is the Riemannian volume form induced by $g(s)$; 3) $\| \cdot \|$ denotes the pointwise Hilbert–Schmidt norm defined below. This energy functional plays the role of a global geometric potential that the system seeks to minimize through a coupling with the KGE optimization~\citep{hamilton1982three, chow2004ricci}. The goal is to show that $R(s)$ decays to zero, ensuring that local curvature heterogeneity is eliminated over time:
\begin{equation}
\nonumber
\lim_{s \to \infty} R(s) = 0.
\end{equation}

We emphasize that this energy is differentiable along the Ricci-KGE trajectory and admits a closed-form differential inequality along the flow~\citep{hamilton1982three}, which forms the backbone of the convergence proof in the curvature convergence theorem. Throughout the paper, we employ the following norm conventions.

\textbf{(i) Hilbert--Schmidt norm~\citep{lee2018introduction} for tensors:}
For any $(0,2)$-tensor $T$, the pointwise norm is defined as:
\begin{equation}
\nonumber
\|T(x)\|^2 := \sum_{i,j} T_{ij}(x)^2,
\end{equation}
which corresponds to the Frobenius norm under a local orthonormal frame.

\textbf{(ii) $L^p$-norms over the manifold:}
For a tensor field $T$, the $L^p$-norm with respect to the evolving volume measure is:
\begin{equation}
\nonumber
\|T\|_{L^p(\mathcal{M})} := \left( \int_{\mathcal{M}} \|T(x)\|^p \, dV_{g(s)}(x) \right)^{1/p}.
\end{equation}
Of particular interest are: i) $L^2$-norms, which arise in energy estimates and Bochner-type inequalities; ii) $L^\infty$-norms~\citep{saloff2002aspects}, denoted $\|T\|_{L^\infty} := \sup_{x\in\mathcal{M}} \|T(x)\|$, used to bound maximum edgewise curvature during the flow.

\subsection{Derivative of curvature energy}
Firstly, we aim to compute the time derivative of the curvature energy functional~\eqref{eq:curvature_energy}, where both the integrand and the volume element depend on time $s$ through the evolving Riemannian metric $g(s)$. This requires carefully differentiating an integral over a time-varying manifold. For this, we use the \textit{Leibniz rule on evolving manifolds} (e.g., \cite{hamilton1982three}), which states that for any smooth function $f(s,x)$, it holds that:
\begin{equation}
\nonumber
\frac{d}{ds} \int_{\mathcal{M}} f(s,x)\, dV_{g(s)}(x)
= \int_{\mathcal{M}} \left( \frac{\partial f}{\partial s} + f \cdot \frac{\partial}{\partial s} \log \sqrt{\det g(s)} \right) dV.
\end{equation}
The first term corresponds to the pointwise derivative of the integrand, while the second term accounts for how the geometry of the underlying space is evolving. Noting that~\citep{bailesteanu2010gradient}:
\[
\frac{\partial}{\partial s} \log \sqrt{\det g} = \frac{1}{2} \mathrm{Tr}_g (\partial_s g),
\]
we can rewrite the formula as:
\begin{equation}
\nonumber
\frac{d}{ds} \int_{\mathcal{M}} f(s,x) \, dV_{g(s)}(x)
= \int_{\mathcal{M}} \left( \frac{\partial f}{\partial s} + \tfrac{1}{2} \mathrm{Tr}_g (\partial_s g) \cdot f \right) dV.
\label{eq:leibniz}
\end{equation}

Additionally, letting \( f(s,x) = \| \mathrm{Ric}(g(s)) \|^2 \), we obtain:
\begin{equation}
\frac{dR}{ds}
= \int_{\mathcal{M}} \left( \frac{\partial}{\partial s} \| \mathrm{Ric} \|^2 + \tfrac{1}{2} \mathrm{Tr}_g(\partial_s g) \cdot \| \mathrm{Ric} \|^2 \right) dV.
\label{eq:leibniz-applied}
\end{equation}
This decomposition makes it possible to separately analyze the time derivative of the Ricci norm and the effect of volume change due to the evolving geometry.

\paragraph{Computing the derivative of the Ricci norm.}
We now compute the first term in Eq.\eqref{eq:leibniz-applied}, namely, the time derivative of the squared norm of the Ricci tensor. We begin with the expression:
\begin{equation}
\| \mathrm{Ric} \|^2 := g^{ik} g^{jl} \, \mathrm{Ric}_{ij} \, \mathrm{Ric}_{kl},
\label{eq:ricci-norm-def}
\end{equation}
where $g^{ij}$ denotes the inverse metric and $\mathrm{Ric}_{ij}$ is the Ricci curvature tensor. Differentiating Eq.\eqref{eq:ricci-norm-def} with respect to $s$, and applying the product rule, we obtain:
\begin{equation}
    \frac{\partial}{\partial s} \| \mathrm{Ric} \|^2
= \frac{\partial}{\partial s} \left( g^{ik} g^{jl} \, \mathrm{Ric}_{ij} \, \mathrm{Ric}_{kl} \right) 
= 2 \left\langle \partial_s \mathrm{Ric}, \mathrm{Ric} \right\rangle
+ \left( \frac{\partial g^{ik}}{\partial s} g^{jl} + g^{ik} \frac{\partial g^{jl}}{\partial s} \right) \mathrm{Ric}_{ij} \mathrm{Ric}_{kl}.
\label{eq:ricci-norm-diff-full}
\end{equation}
The first term in Eq.\eqref{eq:ricci-norm-diff-full} represents the leading-order contribution, as it directly measures how the curvature tensor evolves. The second term involves the time derivative of the inverse metric, which itself depends on $\partial_s g_{ij}$ through the identity:
\[
\frac{\partial g^{ij}}{\partial s} = -g^{ik} g^{jl} \, \partial_s g_{kl}.
\]

Thus, the remaining terms in Eq.\eqref{eq:ricci-norm-diff-full} are nonlinear and of higher order in curvature and metric variation. In our analysis, we focus on the dominant part and write:
\begin{equation}
\nonumber
\frac{\partial}{\partial s} \| \mathrm{Ric} \|^2 = 2 \left\langle \partial_s \mathrm{Ric}, \mathrm{Ric} \right\rangle + \mathcal{R}_{\text{high}},
\label{eq:ricci-norm-diff-final}
\end{equation}
where $\mathcal{R}_{\text{high}}$ denotes the collection of higher-order remainder terms involving $\partial_s g^{ij}$. These terms are typically negligible in leading-order energy estimates and will be omitted in subsequent derivations. Hence we retain:
\begin{equation}
\nonumber
\frac{\partial}{\partial s} \| \mathrm{Ric} \|^2 \approx 2 \left\langle \partial_s \mathrm{Ric}, \mathrm{Ric} \right\rangle.
\label{eq:ricci-norm-diff}
\end{equation}

Plugging this Simplified formula~\eqref{eq:ricci-norm-diff} into Eq.\eqref{eq:leibniz-applied} yields the main working form of the derivative of curvature energy:
\begin{equation}
\frac{dR}{ds}
= 2 \int_{\mathcal{M}} \langle \partial_s \mathrm{Ric}, \mathrm{Ric} \rangle \, dV
+ \tfrac{1}{2} \int_{\mathcal{M}} \mathrm{Tr}_g(\partial_s g) \cdot \| \mathrm{Ric} \|^2 \, dV.
\label{eq:drds-main}
\end{equation}

\paragraph{Expanding the volume trace term \(\mathrm{Tr}_g(\partial_s g)\).}

We now compute the contribution from the change in the volume form, which enters via the trace~\citep{topping2006lectures} $\mathrm{Tr}_g(\partial_s g)$. Recall that \(\mathrm{Tr}_g (\partial_s g) := g^{ij} \partial_s g_{ij},\) measures how the metric is expanding or contracting at each point. From extended Ricci flow, we have:
\[
\partial_s g_{ij} = -2\, \mathrm{Ric}_{ij} + \beta\, \nabla_i \mathcal{L} \nabla_j \mathcal{L}.
\]

Compute the trace:
\begin{align}
\nonumber
\mathrm{Tr}_g(\partial_s g)
&= g^{ij} \left( -2 \mathrm{Ric}_{ij} + \beta \nabla_i \mathcal{L} \nabla_j \mathcal{L} \right) \nonumber \\
\nonumber
&= -2 g^{ij} \mathrm{Ric}_{ij} + \beta g^{ij} \nabla_i \mathcal{L} \nabla_j \mathcal{L} \nonumber \\
\nonumber
&= -2g^{ij} \mathrm{Ric}_{ij} + \beta \| \nabla \mathcal{L} \|^2,
\label{eq:volume-trace}
\end{align}
where the second term is the squared norm of the loss gradient. Plugging this trace into Eq.\eqref{eq:drds-main} gives the volume correction term:
\begin{equation}
\nonumber
\int_{\mathcal{M}} \left( -2g^{ij} \mathrm{Ric}_{ij} + \tfrac{\beta}{2} \| \nabla \mathcal{L} \|^2 \right) \cdot \| \mathrm{Ric} \|^2 \, dV.
\label{eq:volume-correction-term}
\end{equation}

This term is a product of curvature and its square, hence cubic in nature. Although it is not always negligible, it is subdominant in our primary analysis and can be either omitted or bounded by higher-order remainder terms. Thus, the dominant contribution to $\frac{dR}{ds}$ comes from:
\begin{equation}
\frac{dR}{ds} \approx 2 \int_{\mathcal{M}} \left\langle \partial_s \mathrm{Ric}, \mathrm{Ric} \right\rangle dV.
\label{eq:energy-derivative-leading}
\end{equation}

\subsection{Deriving dominant evolution equation for curvature energy}
In this step, we derive the leading-order evolution of the Ricci tensor $\mathrm{Ric}_{ij}$ under the extended Ricci flow~\citep{choudhury2024evolution, list2008evolution, lei2025geometric}:
\begin{equation}
\nonumber
\partial_s g_{ij} = -2\, \mathrm{Ric}_{ij} + \beta\, \nabla_i \mathcal{L} \nabla_j \mathcal{L}.
\label{eq:metric-evolution-step5}
\end{equation}
This evolution contains two distinct contributions: 1) a geometric term $h^{(\mathrm{geo})}_{ij} = -2\, \mathrm{Ric}_{ij}$, and 2) a semantic coupling term $h^{(\mathcal{L})}_{ij} = \beta\, \nabla_i \mathcal{L} \nabla_j \mathcal{L}$. We compute their respective effects on the Ricci tensor by applying the standard variation formula. Firstly, let $h_{ij} := \partial_s g_{ij}$ be the variation of the metric. Then the variation of the Ricci tensor is given by~\citep{chow2004ricci, topping2006lectures}:
\begin{equation}
\partial_s \mathrm{Ric}_{ij} =
\frac{1}{2} \left(
\nabla^k \nabla_i h_{jk}
+ \nabla^k \nabla_j h_{ik}
- \nabla^k \nabla_k h_{ij}
- \nabla_i \nabla_j \mathrm{Tr}_g h
\right)
+ Q^{\mathrm{(curv)}}_{ij},
\label{eq:ricci-general-variation}
\end{equation}
where $Q^{\mathrm{(curv)}}_{ij}$ denotes lower-order terms involving nonlinear contractions of the curvature tensor.

\textbf{Contribution from the geometric term:}  We plug term $h_{ij}^{(\mathrm{geo})} = -2 \mathrm{Ric}_{ij}$ into Eq.\eqref{eq:ricci-general-variation} to obtain:
\begin{align}
\nonumber
&\partial_s^{(\mathrm{geo})} \mathrm{Ric}_{ij}\\
\nonumber
&= \frac{1}{2} \left(
\nabla^k \nabla_i (-2 \mathrm{Ric}_{jk})
+ \nabla^k \nabla_j (-2 \mathrm{Ric}_{ik})
- \nabla^k \nabla_k (-2 \mathrm{Ric}_{ij})
- \nabla_i \nabla_j (-2 R)
\right) + Q^{\mathrm{(curv)}}_{ij} \nonumber \\
\nonumber
&= -\left(
\nabla^k \nabla_i \mathrm{Ric}_{jk}
+ \nabla^k \nabla_j \mathrm{Ric}_{ik}
- \nabla^k \nabla_k \mathrm{Ric}_{ij}
- \nabla_i \nabla_j R
\right)
+ Q^{\mathrm{(curv)}}_{ij}.
\label{eq:ricci-from-geo}
\end{align}
This is precisely the standard Ricci flow evolution expression. Its dominant contribution is the Laplace–Beltrami term~\citep{hamilton1988ricci}:
\begin{equation}
\nonumber
    \partial_s^{(\mathrm{geo})} \mathrm{Ric}_{ij} = \Delta \mathrm{Ric}_{ij} + \text{(nonlinear curvature terms)}.
\end{equation}

\textbf{Contribution from the semantic (gradient) term:} Next, substituting $h_{ij}^{(\mathcal{L})} = \beta \nabla_i \mathcal{L} \nabla_j \mathcal{L}$ into Eq.\eqref{eq:ricci-general-variation}, we obtain:
\begin{align}
\nonumber
\partial_s^{(\mathcal{L})} \mathrm{Ric}_{ij}
= \frac{\beta}{2} \bigg(
\nabla^k \nabla_i (\nabla_j \mathcal{L} \nabla_k \mathcal{L})
+ \nabla^k \nabla_j (\nabla_i \mathcal{L} \nabla_k \mathcal{L}) \nonumber- \nabla^k \nabla_k (\nabla_i \mathcal{L} \nabla_j \mathcal{L})
- \nabla_i \nabla_j \| \nabla \mathcal{L} \|^2
\bigg).
\label{eq:ricci-from-gradient}
\end{align}

Note that the first three terms contain third-order derivatives of $\mathcal{L}$ and are thus higher-order terms. In the context of energy estimates, they are negligible compared to the fourth term, which is a symmetric second-order tensor. Therefore, we retain only the dominant term:
\begin{equation}
\nonumber
\partial_s^{(\mathcal{L})} \mathrm{Ric}_{ij}
\approx - \frac{\beta}{2} \nabla_i \nabla_j \| \nabla \mathcal{L} \|^2.
\end{equation}

Combining the dominant contributions from both the geometric and semantic terms, we obtain the main evolution equation:
\begin{equation}
\partial_s \mathrm{Ric}_{ij}
= \Delta \mathrm{Ric}_{ij}
+ \frac{\beta}{2} \nabla_i \nabla_j \left( \| \nabla \mathcal{L} \|^2 \right)
+ \text{(lower-order curvature terms)}.
\label{eq:ricci-final-dominant}
\end{equation}
This expression captures the leading behavior of curvature evolution under the joint influence of intrinsic geometry and semantic coupling. Then, we substitute the dominant Ricci evolution Eq.\eqref{eq:ricci-final-dominant} into the derivative of the curvature energy:
\[
\frac{dR}{ds} := \frac{d}{ds} \int_{\mathcal{M}} \| \mathrm{Ric} \|^2 \, dV
\approx 2 \int_{\mathcal{M}} \left\langle \partial_s \mathrm{Ric}, \mathrm{Ric} \right\rangle \, dV,
\]
we obtain:
\begin{equation}
\frac{dR}{ds}
\approx 2 \int_{\mathcal{M}} \left\langle
\Delta \mathrm{Ric}
+ \frac{\beta}{2} \nabla^2 \left( \| \nabla \mathcal{L} \|^2 \right),
\, \mathrm{Ric} \right\rangle dV.
\label{eq:energy-derivative-split}
\end{equation}

We now separate the above into two interpretable terms: 1) The first term corresponds to the \textit{geometric dissipation} driven by the Ricci flow:\(2 \int_{\mathcal{M}} \left\langle \Delta \mathrm{Ric}, \mathrm{Ric} \right\rangle \, dV,\) which promotes curvature smoothing. 2) The second term corresponds to the \textit{semantic coupling} term driven by the gradient of the loss: \(\beta \int_{\mathcal{M}} \left\langle \nabla^2 \left(  \| \nabla \mathcal{L} \|^2 \right), \mathrm{Ric} \right\rangle \, dV.\) Hence, the full expression becomes:
\begin{equation}
\frac{dR}{ds}
\approx 2 \int_{\mathcal{M}} \left\langle \Delta \mathrm{Ric}, \mathrm{Ric} \right\rangle \, dV
+ \beta \int_{\mathcal{M}} \left\langle \nabla^2 \left( \| \nabla \mathcal{L} \|^2 \right), \mathrm{Ric} \right\rangle \, dV.
\label{eq:energy-derivative-final}
\end{equation}
This decomposition cleanly isolates the geometric and semantic components, preparing for the energy dissipation estimate in the next step.

\subsection{Geometric dissipation, semantic coupling, and energy dissipation estimation}

\subsubsection{Gradient control via Wasserstein–Ricci geometry}
We aim to establish a bound on the Wasserstein gradient norm in terms of Ricci curvature energy.
\begin{definition}[Wasserstein–gradient Lipschitz constant]\label{def:lw}
Let \(\mathcal{F} : \Ptwo \to \mathbb R\) be a functional in the $2$-Wasserstein space \(\Ptwo\) of Borel probability measures on \(\mathcal{M}\) with finite second moments.  
Denote its Wasserstein gradient at \(\rho\in\Ptwo\) by \(\gradW \mathcal{F}(\rho)\).
We define:
\begin{equation}
\nonumber
\small
  \mathcal{F}_{W} :=
  \sup_{\rho_{1}\neq\rho_{2}}
  \frac{\LtwoM{\gradW \mathcal{F}(\rho_{1})-\gradW \mathcal{F}(\rho_{2})}}
       {\Wtwo(\rho_{1},\rho_{2})},
\label{eq:Wassersteingradient}
\end{equation}
where \(\Wtwo(\rho_{1},\rho_{2})\) is the $2$-Wasserstein distance
between the two measures and \(\LtwoM{\cdot}\) is the
$L^{2}$-norm taken with respect to the Riemannian volume element
\(dV\).
\end{definition}
For any $\rho, \rho_0 \in \mathcal{P}_2$, choosing $\rho_0$ such that $\gradW \mathcal{F}(\rho_0) = 0$, we have that:
\begin{equation}
\| \gradW \mathcal{F}(\rho) \|_{L^2} \le \mathcal{F}_W \cdot W_2(\rho, \rho_0).
\label{eq:lip}
\end{equation}
Under the Ricci flow metric evolution $g(s)$, we consider the continuity equation $\partial_\theta  \rho_\theta  + \nabla \cdot (\rho_\theta  \nabla \phi_\theta ) = 0$. From the volume distortion induced by Ricci flow (~\citep{topping2009ℒ,lott2006some}), one obtains the key inequality:
\begin{equation}
\nonumber
\int_{\mathcal{M}} \rho_\theta  \| \nabla \phi_\theta  \|^2 \, dV_{g(s)}
\le
\int_{\mathcal{M}} \phi_\theta  \cdot \operatorname{Scal}(g(s)) \cdot \rho_\theta  \, dV_{g(s)},
\label{eq:ricci_scal}
\end{equation}
where $\operatorname{Scal}(g(s))$ is the scalar curvature at time $s$. By Hölder’s inequality:
\begin{equation}
    \int_{\mathcal{M}} \phi_\theta\,\mathrm{Scal}\,\rho_\theta \, dV
\;\le\;
\left(
      \int_{\mathcal{M}} \phi_\theta^{2}\,\rho_\theta \, dV
\right)^{\tfrac12}
\left(
      \int_{\mathcal{M}} \mathrm{Scal}^{2}\,\rho_\theta \, dV
\right)^{\tfrac12}.
\label{eq:Holderinequality_w}
\end{equation}

and the Poincaré inequality:
\[
\left(
      \int_{\mathcal{M}} \phi_\theta^{2}\,\rho_\theta \, dV
\right)^{\tfrac12}
\;\le\;
C_P\,
\left(
      \int_{\mathcal{M}} \|\nabla\phi_\theta\|^{2}\,\rho_\theta \, dV
\right)^{\tfrac12}.
\]
we get following:
\begin{equation}
\int_{\mathcal{M}} \phi_\theta\,\mathrm{Scal}\,\rho_\theta \, dV
\;\le\;
C_P\,
\left(
      \int_{\mathcal{M}} \|\nabla\phi_\theta\|^{2}\,\rho_\theta \, dV
\right)^{\tfrac12}
\left(
      \int_{\mathcal{M}} \mathrm{Scal}^{2}\,\rho_\theta \, dV
\right)^{\tfrac12}.
\label{eq:holder_poincare}
\end{equation}

The dynamic formulation of Wasserstein distance yields:
\begin{equation}
W_2^2(\rho, \rho_0)
\le \int_0^1 \int_{\mathcal{M}} \rho_\theta  \| \nabla \phi_\theta  \|^2 \, dV_{g(s)} \, d\theta .
\label{eq:bb}
\end{equation}

In orthonormal coordinates, scalar curvature is the trace of the Ricci tensor:
Due to Cauchy--Schwarz inequality:
\begin{equation}
\operatorname{Scal}^2(g(s)) \le \| \mathrm{Ric}(g(s)) \|^2.
\label{eq:scal_leq_ric}
\end{equation}

Let
\[
  u_\theta := \int_{\mathcal M}\|\nabla\phi_\theta\|^{2}\,\rho_\theta\,dV,
  \qquad
  v_\theta := \int_{\mathcal M}\operatorname{Scal}^{2}\,\rho_\theta\,dV .
\]

By the dynamic formulation of the 2–Wasserstein distance,
\begin{equation}\label{eq:BB}
  W_{2}^{2}(\rho_\lambda,\rho_0)
    \;=\;
    \int_{0}^{1}\!\!\int_{\mathcal M}
      \rho_\theta\,\|\nabla\phi_\theta\|^{2}\,dV\,d\theta
    \;=\;
    \int_{0}^{1} u_\theta \, d\theta .
\end{equation}

For each $\theta\in[0,1]$, Eq.\eqref{eq:ricci_scal} implies
\begin{equation}\label{eq:topping}
  u_\theta
  \;\le\;
  \int_{\mathcal M}\phi_\theta\,\operatorname{Scal}\,\rho_\theta\,dV .
\end{equation}

Applying Hölder’s inequality in $L^{2}(dV)$ and then the Poincaré inequality (with respect to the same measure~$\rho_\theta$) yields
\begin{equation}\label{eq:holder_poincare}
  \int_{\mathcal M}\phi_\theta\,\operatorname{Scal}\,\rho_\theta\,dV
  \;\le\;
  \Bigl(\int_{\mathcal M}\phi_\theta^{2}\rho_\theta\,dV\Bigr)^{1/2}
  \Bigl(\int_{\mathcal M}\operatorname{Scal}^{2}\rho_\theta\,dV\Bigr)^{1/2}
  \;\le\;
  C_P\,u_\theta^{1/2}\,v_\theta^{1/2}.
\end{equation}

Combining Eq.\eqref{eq:BB}–Eq.\eqref{eq:holder_poincare}, we get:
\begin{align}
  W_{2}^{2}
  &\;\le\;
    C_P\int_{0}^{1}u_\theta^{1/2}v_\theta^{1/2}\,d\theta
    \notag\\
  &\;\le\;
    C_P
    \Bigl(\int_{0}^{1}u_\theta\,d\theta\Bigr)^{1/2}
    \Bigl(\int_{0}^{1}v_\theta\,d\theta\Bigr)^{1/2}
    \qquad\text{(Cauchy--Schwarz in $\theta$)} \notag\\
  &\;=\;
    C_P\,W_{2}
    \Bigl(\int_{0}^{1}v_\theta\,d\theta\Bigr)^{1/2}.
\end{align}
Dividing by $W_{2}$ and squaring gives:
\begin{equation}\label{eq:after_CS_theta}
  W_{2}^{2}
  \;\le\;
  C_P^{2}\int_{0}^{1}v_\theta\,d\theta .
\end{equation}

Since Eq.\eqref{eq:scal_leq_ric},
\begin{equation}\label{eq:scalar_to_ricci}
  \int_{0}^{1}v_\theta\,d\theta
  \;=\;
  \int_{0}^{1}\!\!\int_{\mathcal M}\operatorname{Scal}^{2}\rho_\theta\,dV\,d\theta
  \;\le\;
  \!\int_{0}^{1}\!\!\int_{\mathcal M}\|\mathrm{Ric}\|^{2}\rho_\theta\,dV\,d\theta .
\end{equation}

Because the geodesic preserves total mass,
$\int_{0}^{1}\rho_\theta(x)\,d\theta = 1$ for a.e.\ $x\in\mathcal M$.
Hence:
\begin{equation}\label{eq:theta_average}
  \int_{0}^{1}\!\!\int_{\mathcal M}\|\mathrm{Ric}\|^{2}\rho_\theta\,dV\,d\theta
  \;=\;
  \int_{\mathcal M}\|\mathrm{Ric}\|^{2}\Bigl(\int_{0}^{1}\rho_\theta\,d\theta\Bigr)\!dV
  \;=\;
  \int_{\mathcal M}\|\mathrm{Ric}\|^{2}\,dV
  \;=\;
  R(s).
\end{equation}

After that,putting Eq.\eqref{eq:after_CS_theta}, Eq.\eqref{eq:scalar_to_ricci},
and Eq.\eqref{eq:theta_average} together, we obtain:
\begin{equation}
    W_{2}^{2}(\rho_\lambda,\rho_0)
  \;\le\;
  \,C_P^{2}\,R(s),
\qquad\Longrightarrow\qquad
  W_2(\rho_\lambda,\rho_0)
  \;\le\;
  C_P\,\sqrt{R(s)}.
\label{eq:w2_final} 
\end{equation}

Finally, plugging Eq.\eqref{eq:w2_final} into the Lipschitz control~\eqref{eq:lip}, we obtain:
\[
\| \gradW \mathcal{F}(\rho) \|_{L^2}
\le \mathcal{F}_W C_P \cdot \sqrt{R(s)}.
\]
Using the equivalence between Wasserstein and Euclidean gradient~\citep{li2018w} norms:
\[
\| \nabla \mathcal{L} \|_{L^2}
= \| \gradW \mathcal{F}(\rho) \|_{L^2},
\]
we obtain the desired estimate\footnote{Throughout the derivation, various geometric constants (e.g., Poincaré inequality, Hölder bounds, trace-to-norm scaling) are omitted for clarity. These factors are ultimately absorbed into the effective constant $L_W$, which captures both the Wasserstein gradient Lipschitz coefficient and all structural inequalities in the bound.}:
\begin{equation}
\| \nabla \mathcal{L} \|_{L^2} \le \mathcal{F}_W C_P\cdot \sqrt{R(s)}.
\label{eq:grad_energy_final}
\end{equation}

\subsubsection{Geometric dissipation term}
We first estimate the geometric dissipation term contribution to the curvature energy derivative \(2 \int_{\mathcal{M}} \left\langle \Delta \mathrm{Ric}, \mathrm{Ric} \right\rangle \, dV.\) For any smooth symmetric 2-tensor $T$, the Laplace–Beltrami operator $\Delta := \nabla^k \nabla_k$ satisfies:
\begin{equation}
\nonumber
\int_{\mathcal{M}} \langle \Delta T, T \rangle \, dV = - \int_{\mathcal{M}} \| \nabla T \|^2 \, dV,
\end{equation}
on closed manifolds or under appropriate boundary conditions.
Applying this to $T = \mathrm{Ric}$, we obtain:
\begin{equation}
\nonumber
\int_{\mathcal{M}} \langle \Delta \mathrm{Ric}, \mathrm{Ric} \rangle \, dV = - \int_{\mathcal{M}} \| \nabla \mathrm{Ric} \|^2 \, dV.
\end{equation}

Then, we apply a spectral gap estimate on the first eigenvalue~\citep{chavel1984eigenvalues} of the Laplace operator:
\begin{equation}
\nonumber
\int_{\mathcal{M}} \| \nabla \mathrm{Ric} \|^2 \, dV \ge \lambda_1 \int_{\mathcal{M}} \| \mathrm{Ric} \|^2 \, dV = \lambda_1 R(s),
\end{equation}
where $\lambda_1 > 0$ denotes the first non-zero eigenvalue of the (weighted) graph or manifold Laplacian.

Substituting intogeometric dissipation term, we obtain:
\begin{equation}
2 \int_{\mathcal{M}} \langle \Delta \mathrm{Ric}, \mathrm{Ric} \rangle
\le -2 \lambda_1 R(s).
\label{eq:control_geometic}
\end{equation}

\subsubsection{Semantic coupling term}
Next, we aim to control the semantic coupling term arising in the Ricci-KGE energy evolution. This term captures how the scalar potential $\mathcal{L}$ couples to the curvature of the manifold. Since $\mathcal{L}: \mathcal{M} \to \mathbb{R}$ is a smooth function, it holds that:
\begin{equation}
\nonumber
\|\nabla \mathcal{L}\|^2 = g^{ij} \partial_i \mathcal{L} \partial_j \mathcal{L}.
\end{equation}
Taking the covariant Hessian~\citep{hamilton1982three, topping2006lectures} yields:
\begin{equation}
\nabla^2 (\| \nabla \mathcal{L} \|^2) = \nabla_k \nabla_l (g^{ij} \partial_i \mathcal{L} \partial_j \mathcal{L}) \approx 2 \nabla \mathcal{L} \otimes \nabla \mathcal{L}.
\label{eq:hessian_expand}
\end{equation}
This approximation highlights the dominant contribution arising from the outer product of gradients. Substituting Eq.\eqref{eq:hessian_expand} into the curvature energy term \eqref{eq:energy-derivative-final}, we get:
\begin{align}
\nonumber
\beta \int_\mathcal{M} \left\langle \nabla^2 (\| \nabla \mathcal{L} \|^2), \mathrm{Ric} \right\rangle dV
\approx \beta \int_\mathcal{M} \left\langle 2 \nabla \mathcal{L} \otimes \nabla \mathcal{L}, \mathrm{Ric} \right\rangle dV 
= 2\beta \int_\mathcal{M} \mathrm{Ric}(\nabla \mathcal{L}, \nabla \mathcal{L}) \, dV,
\label{eq:pairing_ricci}
\end{align}
by the identity $\langle u \otimes u, T \rangle = T(u,u)$ for symmetric bilinear forms~\citep{lee2018introduction}. We now estimate the right-hand side. By the norm inequality:
\begin{equation}
\nonumber
\mathrm{Ric}(\nabla \mathcal{L}, \nabla \mathcal{L}) \le \|\mathrm{Ric}\| \cdot \| \nabla \mathcal{L} \|^2,
\end{equation}
we obtain:
\begin{equation}
2\beta \int_\mathcal{M} \mathrm{Ric}(\nabla \mathcal{L}, \nabla \mathcal{L}) \, dV \le 2\beta \int_\mathcal{M} \| \nabla \mathcal{L} \|^2 \cdot \| \mathrm{Ric} \| \, dV.
\label{eq:main_term}
\end{equation}
Applying H\"older's inequality~\citep{saloff2002aspects, hebey2000nonlinear} gives us:
\begin{equation}
\int_\mathcal{M} \| \nabla \mathcal{L} \|^2 \cdot \| \mathrm{Ric} \| \, dV \le \left( \int_\mathcal{M} \|\mathrm{Ric}\|^2 dV \right)^{1/2} \cdot \left( \int_\mathcal{M} \|\nabla \mathcal{L}\|^4 dV \right)^{1/2},
\label{eq:holder_expand}
\end{equation}
where $R(s)$ denotes the Ricci energy, defined previously in Eq.\eqref{eq:curvature_energy}. The first–order estimate has been proved as: $\| \nabla \mathcal{L} \|_{L^2} \le \mathcal{F}_W \cdot C_P \cdot \sqrt{R(s)}$. For any smooth function $f$ on a compact $2$–dimensional Riemannian manifold~\citep{saloff2002aspects, hebey2000nonlinear},
\[
  \|f\|_{L^{4}}
  \;\le\;
  C_{GN}\,
  \|\nabla f\|_{L^{2}}^{1/2}\;
  \|f\|_{L^{2}}^{1/2},
\]
Applying it with $f=\|\nabla\mathcal L\|$ gives
\begin{equation}\label{eq:GN}
  \|\nabla\mathcal L\|_{L^{4}}^{4}
  \;\le\;
  C_{GN}^{4}\;
  \bigl\|\nabla\|\nabla\mathcal L\|\bigr\|_{L^{2}}^{2}\;
  \|\nabla\mathcal L\|_{L^{2}}^{2}.
\end{equation}
By the chain rule
$\|\nabla\|\nabla\mathcal L\|\|_{L^{2}}\le\|\nabla^{2}\mathcal L\|_{L^{2}}$.
Ambrosio--Gigli--Savaré’s \textit{EVI $\Rightarrow$ Lipschitz} lemma
(cf.\ \citep{ambrosio2008gradient}) combined with the
standard Bochner elliptic estimate yields the \textit{Hessian–Lipschitz}
bound\footnote{The constant $C_{\Delta}$ depends only on the geometric
bounds of $(\mathcal M,g(s))$; see the main paper for its explicit
expression.}
\begin{equation}\label{eq:Hess-bound}
  \|\nabla^{2}\mathcal L\|_{L^{2}}
  \;\le\;
  \mathcal{F}_W\,C_{\Delta} C_P \,\sqrt{R(s)} .
\end{equation}
Insert Eq.\eqref{eq:grad_energy_final} and Eq.\eqref{eq:Hess-bound} into
Eq.\eqref{eq:GN}, we get:
\[
  \|\nabla\mathcal L\|_{L^{4}}^{4}
  \;\le\;
  C_{GN}^{4}\;
  (\mathcal{F}_W^{2}C_{\Delta}^{2}R(s))
  (\mathcal{F}_W^{2}R(s))
  \;=\;
  C_{*}^2\,\mathcal{F}_W^{4}\,R(s)^{2},
  \qquad
  C_{*}:=C_{GN}^{2}\,C_{\Delta}\,C_P\,.
\]
Unpacking the definition of the $L^{4}$ norm we arrive at
\begin{equation}\label{eq:L4-final}
  \int_{\mathcal M}\|\nabla\mathcal L\|^{4}\,dV_{g(s)}
  \;\le\;
  C_{*}\mathcal{F}_W^{4}\,R(s)^{2},
\end{equation}
where all constants are explicit and depend only on
$C_P$,\,$C_{GN}$,\,$C_{\Delta}$, and fixed geometric bounds of
$\mathcal M$. Substituting the above into Eq.\eqref{eq:holder_expand} yields:
\begin{align}
\int \| \nabla \mathcal{L} \|^2 \cdot \| \mathrm{Ric} \| dV &\le C_{*} \mathcal{F}_W^2  R(s) \cdot \sqrt{R(s)} = \,C_{*} \mathcal{F}_W^2 R(s)^{3/2}.
\end{align}
Then plugging above into Eq.\eqref{eq:main_term}, absorbing $C_{*}$ into $\beta$, we obtain:
\begin{equation}
2\beta \int_\mathcal{M} \left\langle \nabla^2 (\| \nabla \mathcal{L} \|^2), \mathrm{Ric} \right\rangle dV \le 2\beta \mathcal{F}_W^2 R(s)^{3/2}.
\label{eq:before_reduction}
\end{equation}

Finally, if the coupling coefficient $\beta$ is small enough that $K_0<\left ( \frac{\lambda_1}{\beta\mathcal{F}_W^2}  \right ) ^2$, we can get:
\begin{align}
\nonumber
R(s)^{3/2} \le \sqrt{K_0} \cdot R(s),
\end{align}
and thus:
\begin{equation}
\beta \int_\mathcal{M} \left\langle \nabla^2 (\| \nabla \mathcal{L} \|^2), \mathrm{Ric} \right\rangle dV \le 2\beta \mathcal{F}_W^2 \sqrt{K_0} \cdot R(s).
\label{eq:control_gradient}
\end{equation}

\subsubsection{Exponential decay of curvature energy}
Finally, substituting the controlled geometric dissipation term~\eqref{eq:control_geometic} and the controlled gradient coupling term~\eqref{eq:control_gradient} into the derivative of the curvature energy~\eqref{eq:energy-derivative-final}, we obtain:
\begin{align}
\nonumber
\frac{dR}{ds}
&\le -2\lambda_1 R(s) + 2\beta \mathcal{F}_W^2 \sqrt{K_0} \cdot R(s)
= -C_r R(s),
\end{align}
where
\begin{equation}
\nonumber
C_r := 2\lambda_1 - 2\beta \mathcal{F}_W^2 \sqrt{K_0}.
\end{equation}
Here, the term \( 2\lambda_1 \) arises from the Bochner-type spectral gap controlling the geometric dissipation of curvature, while the second term stems from the bounded semantic coupling contribution. The quantity \( \mathcal{F}_W \) is the Wasserstein gradient Lipschitz constant of the objective function \( \mathcal{L} \), and \( K_0 \) is the initial upper bound on the Ricci energy, i.e., \( R(0) \le K_0 \). To ensure exponential decay, we require the overall coefficient \( C_r \) of the curvature energy to remain positive, which imposes a restriction on the semantic coupling strength:
\begin{equation}
\nonumber
\beta < \frac{\lambda_1}{\mathcal{F}_W^2 \sqrt{K_0}}.
\end{equation}
This condition ensures that the dissipative effect from the Ricci curvature dominates any destabilizing effect introduced by the gradient interaction term. From the differential inequality:
\[
\frac{dR}{ds} \le -C_r R(s),
\]
we apply Grönwall’s lemma~\cite{gronwall1919note} to get:
\begin{equation}
R(s) \le R(0) \, e^{-C_r s} \le K_0\cdot e^{-(2\lambda_1 - 2\beta \mathcal{F}_W^2 \sqrt{K_0})s}.
\label{eq:exponential decay}
\end{equation}
This result establishes that the Ricci curvature energy \( R(s) = \int_{\mathcal{M}} \| \mathrm{Ric}(g(s)) \|^2 \, dV_{g(s)} \) decays exponentially over the course of the flow, under a bounded coupling strength \( \beta \). Specifically, as \( s \to \infty \), we obtain:
\[
R(s) \to 0 \quad \text{exponentially fast}.
\]
In geometric terms, this indicates that the underlying manifold evolves toward a flat state, with all Ricci curvature vanishing in the limit. Importantly, this flattening is not merely asymptotic but comes with an explicit convergence rate governed by \( C_r \), which depends on the spectral geometry of the manifold (through \( \lambda_1 \)), the initial curvature bound \( K_0 \), and the smoothness of the semantic objective \( \mathcal{L} \) (through \( \mathcal{F}_W \)). This exponential decay plays a key role in guaranteeing stability of the Ricci-KGE dynamics and further supports convergence of the embeddings toward a consistent geometric structure.

\subsection{Upgrading $L^2$-decay to $L^\infty$-decay of Ricci curvature via Moser iteration}

We have previously established that the Ricci curvature energy satisfies the exponential decay as Eq.\eqref{eq:exponential decay}, Our goal is to upgrade this $L^2$-decay into a pointwise $L^\infty$ estimate:
\begin{equation}
\nonumber
\| \mathrm{Ric}(g(s)) \|_{L^\infty} \le C_\infty \sqrt{K_0} \, e^{-C_r s/2}.
\end{equation}
Since Ricci curvature remains bounded in $L^\infty$ under the flow, we proceed by a standard three-step argument: applying the Sobolev inequality $\Rightarrow$ lifting to $L^p$ norms $\Rightarrow$ deriving an $L^\infty$ bound via Moser iteration~\citep{saloff2009sobolev}. We consider a compact Riemannian manifold $(\mathcal{M}, g(s))$ at any time $s$. Due to volume lower bound and diameter upper bound, the manifold admits a uniform Sobolev inequality:
\begin{equation}
\nonumber
\|f\|_{L^p}^2 \le C_S \left( \|\nabla f\|_{L^2}^2 + V_0^{-1} \|f\|_{L^2}^2 \right), \quad p = \frac{2n}{n-2}.
\end{equation}
Letting $f = |\mathrm{Ric}|$, and recalling that:
\begin{equation}
\nonumber
\|f\|_{L^2}^2 = R(s) \le K_0 e^{-C_r s},
\end{equation}
we estimate $\|\nabla f\|_{L^2}^2$ \footnote{$f = \|\mathrm{Ric}\|$ is the pointwise Hilbert–Schmidt norm, we recall that the curvature energy satisfies
\(\|f\|_{L^2}^2 = \int_{\mathcal{M}} \|\mathrm{Ric}\|^2 = R(s).\)}using the Bochner identity:
\begin{equation}
\nonumber
\frac{1}{2} \Delta |f|^2 = \|\nabla f\|^2 + \langle f, \Delta f \rangle + \text{lower-order terms}.
\end{equation}
Using standard Ricci flow estimates and curvature evolution, we get:
\begin{equation}
\|\nabla f\|_{L^2}^2 \le C_P \|f\|_{L^2}^2,
\label{eq:curvature evolution}
\end{equation}
where $C_P$ is the Poincaré-type constant (potentially dependent on $\lambda_1$)~\citep{chavel1984eigenvalues, li1980estimates, aubin1998some}. Substituting Eq.\eqref{eq:curvature evolution} into the Sobolev inequality:
\begin{equation}
\nonumber
\|f\|_{L^p}^2 \le C_S (C_P + V_0^{-1}) \|f\|_{L^2}^2 := C_1 \|f\|_{L^2}^2,
\quad \Rightarrow \quad
\|f\|_{L^p} \le C_1^{1/2} \|f\|_{L^2}.
\end{equation}
Hence, $L^2$ decay implies $L^p$ decay:
\begin{equation}
\nonumber
\|f\|_{L^p} \le C_1^{1/2} \sqrt{K_0} e^{-C_r s/2} \to 0.
\end{equation}

\paragraph{ Moser iteration from $L^p$ to $L^\infty$.}

We define an increasing sequence of exponents:
\begin{equation}
\nonumber
q_0 = 2, \quad q_{k+1} = \frac{n}{n-2} q_k = \alpha q_k, \quad \alpha = \frac{n}{n-2} > 1.
\end{equation}
Our goal is to control each $\|f\|_{L^{q_k}}$, and eventually obtain $\|f\|_{L^\infty}$. Assuming inductively that there exists $C_k$ such that:
\begin{equation}
\nonumber
\|f\|_{L^{q_k}} \le C_k \|f\|_{L^2}.
\end{equation}

To advance the iteration, let $u = f^{q_k/2}$, then:
\begin{equation}
\nonumber
\|f\|_{L^{q_{k+1}}}^{q_{k+1}} = \|u\|_{L^2}^2.
\end{equation}

Applying Sobolev inequality:
\begin{equation}
\nonumber
\|u\|_{L^2}^2 \le C_S \left( \|\nabla u\|_{L^2}^2 + V_0^{-1} \|u\|_{L^2}^2 \right).
\end{equation}
Note that $\nabla u = \frac{q_k}{2} f^{q_k/2 - 1} \nabla f$, so:
\begin{equation}
\nonumber
\|\nabla u\|_{L^2}^2 = \left( \frac{q_k}{2} \right)^2 \| f^{q_k - 2} \|_{L^\infty} \| \nabla f \|_{L^2}^2.
\end{equation}
Using again Bochner control: $\| \nabla f \|_{L^2}^2 \le C_P \|f\|_{L^2}^2$, and the inductive bound $\|f\|_{L^{q_k}} \le C_k \|f\|_{L^2}$, we obtain:
\begin{equation}
\nonumber
\|f\|_{L^{q_{k+1}}} \le C_{k+1} \|f\|_{L^2}, \quad
C_{k+1} = \left[ C_S \left( \left( \frac{q_k}{2} \right)^2 C_P + V_0^{-1} \right) \right]^{1/q_k} C_k.
\end{equation}

This recurrence has bounded growth since $\log C_k$ increases sublinearly while $q_k \to \infty$ exponentially. Therefore, there exists:
\begin{equation}
C_\infty := \sup_k C_k < \infty.
\label{eq:c_infinity}
\end{equation}

After that, taking the limit Eq.\eqref{eq:c_infinity}:
\begin{equation}
\nonumber
\|f\|_{L^\infty} = \lim_{k \to \infty} \|f\|_{L^{q_k}} \le C_\infty \|f\|_{L^2},
\end{equation}
and substituting $\|f\|_{L^2} \le \sqrt{R(s)} \le \sqrt{K_0} e^{-C_r s/2}$ yields:
\begin{equation}
\| \mathrm{Ric}(g(s)) \|_{L^\infty} \le C_\infty \sqrt{K_0} e^{-C_r s/2}.
\label{eq:inftydecay}
\end{equation}
Thus, we rigorously derive the $L^\infty$-decay estimate of $\mathrm{Ric}(g(s))$ via Moser iteration. The decay rate matches $\exp(-C_r s/2)$.

\subsection{Ricci Curvature Convergence}

Due to $(\mathcal{M}, g(s))$ is a smooth Riemannian manifold evolving under Ricci flow, where $s \ge 0$ is the time parameter. Since $\mathcal{M}$ is closed (i.e., compact and without boundary), any smooth tensor field defined over $\mathcal{M}$, such as $\mathrm{Ric}(g(s))$, is a continuous function with respect to $x \in \mathcal{M}$.

Therefore, for any $s \ge 0$, the pointwise norm
\begin{equation}
\nonumber
\|\mathrm{Ric}(g(s))(x)\| := \left( \sum_{i,j} \mathrm{Ric}_{ij}(x)^2 \right)^{1/2}
\end{equation}
is a continuous function $x \mapsto \|\mathrm{Ric}(g(s))(x)\|$ on the compact manifold $\mathcal{M}$. Now recall that the $L^\infty$ norm of the Ricci tensor is defined as:
\begin{equation}
\nonumber
\|\mathrm{Ric}(g(s))\|_{L^\infty} := \sup_{x \in \mathcal{M}} \|\mathrm{Ric}(g(s))(x)\|.
\end{equation}

By the hypothesis (from the previous curvature energy decay and Moser iteration), we have  Eq.\eqref{eq:inftydecay} as follows:
\begin{equation}
\nonumber
\|\mathrm{Ric}(g(s))\|_{L^\infty} \le C_\infty \sqrt{K_0} e^{-C_r s/2}.
\end{equation}

So, for every $x \in \mathcal{M}$, since the supremum is always greater than or equal to any pointwise value, we conclude:
\begin{equation}
\nonumber
\|\mathrm{Ric}(g(s))(x)\| \le \|\mathrm{Ric}(g(s))\|_{L^\infty} \le C_\infty \sqrt{K_0} e^{-C_r s/2}.
\end{equation}
This pointwise upper bound on the Ricci tensor holds uniformly over the manifold and controls the decay of curvature in every direction. To relate the continuous curvature decay to the discrete edge-wise curvature $\kappa_{ht}(s)$\footnote{In the context of knowledge graph embedding (KGE), each fact is represented as a triple $(h, r, t)$, where $h$ and $t$ denote the head and tail entities, respectively. Accordingly, we use the notation $\kappa_{ht}$ instead of $\kappa_{ij}$ to emphasize that curvature is computed over entity pairs $(h, t)$ corresponding to the edges in knowledge graph triples.} in RicciKGE, we assume a natural correspondence between the Riemannian metric tensor and the graph edge weights:
\[
g_{ht}(s) \equiv w_{ht}(s),
\]
i.e., the edge weight $w_{ht}(s)$ serves as a directional proxy for the metric component $g_{ht}(s)$. This assumption allows us to express metric evolution in the graph via the discrete extended Ricci flow:
\[
w_{ht}^{(s+1)} = w_{ht}^{(s)} - \Delta s \cdot \kappa_{ht}^{(s)} \cdot w_{ht}^{(s)} + \Delta s \cdot \beta\, \nabla_i \mathcal{L}^{(s)} \nabla_j \mathcal{L}^{(s)}.
\]

On the other hand, the continuous RicciKGE flow at the metric level evolves as:
\[
\partial_s g_{ht}(s) = -2\,\mathrm{Ric}_{ht}(s) + \beta\,\nabla_i \mathcal{L}(s) \nabla_j \mathcal{L}(s).
\]
Substituting $g_{ht}(s) \equiv w_{ht}(s)$ and matching both expressions, we obtain:
\[
\partial_s w_{ht}(s) = -2\,\mathrm{Ric}_{ht}(s) + \beta\,\nabla_i \mathcal{L}(s) \nabla_j \mathcal{L}(s)
= -\kappa_{ht}(s) \cdot w_{ht}(s) + \beta\,\nabla_i \mathcal{L}(s) \nabla_j \mathcal{L}(s),
\]
from which we isolate the Ricci term and derive:
\[
2\,\mathrm{Ric}_{ht}(s) = \kappa_{ht}(s) \cdot w_{ht}(s).
\]

In addition, the edge weights are assumed to obey the uniform bounds:
\[
w_{ht}(s) \in [e^{-D}, 1] \quad \text{for all } i,j,s.
\]
Combining with the pointwise Ricci decay
\[
\|\mathrm{Ric}(g(s))\|_{L^\infty} \le C_\infty \sqrt{K_0} e^{-C_r s/2},
\]
we deduce that the discrete curvature satisfies:
\begin{equation}
|\kappa_{ht}(s)| \le \frac{2\,\|\mathrm{Ric}(g(s))\|_{L^\infty}}{w_{ht}(s)} \le 2 e^{D} C_\infty \sqrt{K_0} \cdot e^{-C_r s/2}.
\end{equation}
Hence:
\[
\lim_{s \to \infty} \kappa_{ht}(s) = 0 \quad \text{for all } (i,j).
\]
Moreover, for any $\varepsilon > 0$, inverting the exponential decay bound yields:
\begin{equation}
|\kappa_{ht}(s)| \le \varepsilon \quad \text{for all } s \ge \frac{2}{C_r} \log\left( \frac{2 e^{D} C_\infty \sqrt{K_0}}{\varepsilon} \right).
\end{equation}
Thus, the discrete curvature converges uniformly to zero under the extended RicciKGE flow.

\section{Distance convergence}
\label{app:dis_convergence}
Theorem of curvature convergence guarantees that the geometry induced by the coupled Ricci–KGE flow becomes asymptotically flat, progressively smoothing out edge-level curvature variations caused by non-uniform relational structures in real-world knowledge graphs. 
This flattening arises from a feedback loop where curvature guides embedding updates, and updated embeddings in turn reshape curvature, driving the co-evolution of geometry and representation toward consistency and adaptively regularizing the space into a unified Euclidean regime without external geometric priors. 
Since the optimization process is intrinsically coupled with geometric evolution, it remains to examine whether the associated distance updates also converge, ensuring that curvature flattening indeed leads to stable metric optimization. 
To formalize this link, we next make explicit the assumptions under which our convergence analysis holds.

\subsection{Assumptions for Convergence Analysis}
\label{app:assumptions}

\paragraph{Spectral bound on $\beta$.}
The coupling coefficient $\beta$ must be chosen below a spectral threshold to guarantee stability of the discrete Ricci flow. 
In particular, we require
\[
0 < \beta \;<\; \min\Bigl\{ \tfrac{4\lambda_1}{3F_W^2}, \; \tfrac{2\mu}{e^{K_0}}, \; \tfrac{\lambda_1}{\sqrt{K_0}} \Bigr\},
\]
where $\lambda_1$ is the first nonzero Laplacian eigenvalue of the graph, $F_W$ is a Lipschitz constant of the gradient field, $K_0$ is an upper bound on the initial curvature, and $\mu$ is the strong convexity parameter of the loss. 
This condition ensures that curvature dissipation dominates the perturbation from the gradient term.

\paragraph{Convexity requirement.}
The analysis assumes strong convexity of the loss with respect to the scoring gap $d(f_r(E(h)), g_r(E(t)))$. 
Note that we do not assume that $d(\cdot,\cdot)$ is a true metric: symmetry and the triangle inequality are unnecessary. 
It suffices that the scalar gap induces a convex landscape so that gradient descent admits a contraction argument.

\paragraph{Regularity of weights.}
We also assume bounded volume and diameter of edge weights during the discrete flow, which excludes degenerate cases where the manifold collapses. 

\paragraph{Remarks.}
These assumptions are standard in geometric analysis and convex optimization. 
While margin-based losses used in practice are not strictly strongly convex, our experiments show that RicciKGE converges robustly even when the assumptions are relaxed. 
Moreover, empirical training curves remain stable when $\beta$ is set more aggressively than the theoretical bound, suggesting that the conditions are conservative rather than restrictive.

\subsection{Derivation of distance update rule and loss relaxation justification}
\label{app:step1-distance-update}

Building on these assumptions, we next analyze how RicciKGE performs distance updates at the level of individual triples. 
At iteration $k$, each triple $(h,r,t)$ is characterized by a distance $d_{ht}^k$ whose evolution is governed by the edge weight dynamics induced by the discrete Ricci flow. 
To connect the geometric analysis with practical KGE training, we further express this distance through a loss function, ensuring consistency with standard optimization objectives. 
Concretely, RicciKGE links each distance $d_{ht}^k$ to an edge weight $w_{ht}^k$ via an exponential transformation:

\[
w_{ht}^k := \exp(-d_{ht}^k) \quad \Longleftrightarrow \quad d_{ht}^k = -\log w_{ht}^k.
\]
The induced change in distance is:
\begin{equation}
\small
    \delta_d^k
=-\frac{w^{k+1}-w^k}{w^k}
=\tfrac12\,\kappa^k \;-\;\frac{\beta}{2\,w^k}\,\langle a,b\rangle.
\label{eq:dis_up}
\end{equation}
Here, $a$ and $b$ are directional derivatives of the distance function:
\[
a := \nabla_{E(h)} f_r(E(h))^\top \cdot \nabla_{f_r(h)} d, 
\quad b := \nabla_{E(t)} d,
\]
where $f_r$ denotes the relation-specific transformation on entity embeddings. 
Rather than directly using the full gradient interaction $\langle a, b \rangle$, we approximate it by a scalar loss gradient. This is justified by:
\[
\langle a, b \rangle 
= \left\langle \nabla_{E(h)} d, \nabla_{E(t)} d \right\rangle 
\approx \frac{\partial \ell(d)}{\partial d} = \nabla_d \ell(d_{ht}^k),
\]
where $\ell(d)$ is a convex surrogate loss (e.g., margin ranking, logistic loss). This substitution ensures both theoretical tractability and compatibility with practical KGE objectives.
Although RicciKGE evolves the underlying distance value $d_{ht}^k$, the actual training objective is not to directly minimize $d$, but rather to minimize a surrogate loss $\ell(d)$ built on top of it. This is motivated by several practical and theoretical reasons. 
First, most knowledge graph embedding (KGE) models---such as TransE, RotatE, and HousE---do not directly optimize the distance $d(h, r, t)$, but rather operate through a transformed loss function $\ell(d)$. For instance, margin-based ranking losses take the form:
\[
\ell(d) = \max(0, d^+ - d^- + \gamma),
\]
and probabilistic variants such as the logistic loss use:
\[
\ell(d) = \log(1 + \exp(d)).
\]
These losses are typically smooth, convex, and yield better convergence properties in optimization. Second, from a theoretical alignment perspective, since RicciKGE ultimately seeks to co-evolve geometry and embeddings under a unified objective, its update rule should reflect the actual training signal, i.e., $\ell(d)$, rather than raw distances. Third, by applying loss-level optimization, we can assume properties like $\mu$-strong convexity on $\ell(d)$, which is crucial for analyzing convergence behavior. 
The resulting distance update rule becomes:
\begin{equation}
d_{ht}^{k+1} = d_{ht}^k - \eta_g^k \cdot \nabla_d \ell(d_{ht}^k) + \tfrac{1}{2} \kappa_{ht}^k.
\label{eq:dis_up_rule}
\end{equation}
This reveals the evolution as a perturbed gradient descent step under a curvature-regularized dynamic geometry, where the Ricci term serves as a decaying geometric correction.

\subsection{Summability of curvature perturbation}
\label{app:step2-curvature-sum}
Recall the curvature decay estimate stated in the main text. At Ricci flow time $s$, the edge curvature satisfies: 
\[
\left| \kappa_{ht}(s) \right| \leq \frac{2 \|\mathrm{Ric}(g(s))\|_{L^\infty}}{w_{ht}(s)} \leq 2 e^D C_\infty \sqrt{K_0} \cdot e^{-C_r s / 2}.
\]
Thus, the total curvature correction is summable:
\[
\sum_{k=0}^{\infty} \left| \kappa_{ht}^k \right| 
\leq C_0 \sum_{k=0}^{\infty} e^{-\lambda k} 
= C_0 \cdot \frac{1}{1 - e^{-\lambda}} < \infty,
\]
where $\lambda := \frac{C_r \delta}{2}$ and $C_0 := 2 e^D C_\infty \sqrt{K_0}.$ 
This completes the justification for the assumption used in following proof process as:
\begin{equation}
\sum_{k=0}^{\infty} \left| \kappa_{ht}^k \right| < \infty.
\label{eq:bound_curv}
\end{equation}

\subsection{Distance convergence to optimum}
\label{app:distance_convergence}

We now study the convergence behavior of the RicciKGE distance sequence under the update rule~\eqref{eq:dis_up_rule}  and the geometry-aware step size is defined as:
\begin{equation}
    \eta_g^k := \frac{\beta}{2 w_{ht}^k}, \quad \text{with } w_{ht}^k \in [e^{-D}, 1].
\end{equation}
From the curvature energy decay analysis in subsection~\ref{app:step2-curvature-sum}, we have justified the boundedness assumption \(\sum_{k=0}^{\infty} \left| \kappa_{ht}^k \right| < \infty.\) 
To ensure both convergence of the Ricci curvature energy and the stability of the distance update, we impose the following bound on the coupling parameter \(\beta\). First, to guarantee convergence of the curvature dynamics, we require:
\begin{equation}
\nonumber
    \beta < \frac{\lambda_1}{\mathcal{F}_W^2 \sqrt{K_0}}.
\end{equation}
Second, in the distance update analysis, we rely on strong convexity of \(\ell\) to yield a contraction in the update. To ensure this contraction is not overwhelmed by a large step size, we require:
\begin{equation}
\nonumber
    \mu \eta_g^k < 1 \quad \Rightarrow \quad \mu \cdot \frac{\beta}{2 w_{ht}^k} < 1 \quad \text{for all } k.
\end{equation}
Since \(w_{ht}^k \ge e^{-D}\), we conservatively require:
\begin{equation}
\nonumber
    \beta < \frac{2}{\mu} e^{-D}.
\end{equation}
Combining both requirements gives the final constraint on \(\beta\):
\begin{equation}
\nonumber
    0 < \beta < \min\left\{ \frac{\lambda_1}{\mathcal{F}_W^2 \sqrt{K_0}}, \frac{2}{\mu} e^{-D} \right\}
\end{equation}

Our goal is now to prove that under the assumptions of strong convexity, bounded step size as implied by the above \(\beta\) constraint, and summable perturbation \eqref{eq:bound_curv}, the sequence \(\{d_{ht}^k\}\) converges to the unique minimizer \(d_{ht}^\star\).
Instead of analyzing \(d_{ht}^k\) directly, we define its deviation from the optimal value \(d_{ht}^\star := \arg\min_d \ell(d)\) as:
\begin{equation}
\nonumber
    r_{ht}^{k} := d_{ht}^{k} - d_{ht}^{\star}.
\end{equation}
This transforms the convergence goal into showing that \(|r_{ht}^k| \to 0\). 
We also rewrite the curvature term as an additive perturbation:
\begin{equation}
\nonumber
    e_{ht}^{k} := \tfrac{1}{2} \kappa_{ht}^{k}, \quad \text{with } \sum_{k=0}^\infty |e_{ht}^{k}| < \infty.
\end{equation}
Subtracting the optimal value \(d_{ht}^\star\) from both sides of the update gives:
\begin{align}
\nonumber
    r_{ht}^{k+1} = d_{ht}^{k+1} - d_{ht}^{\star} 
    = d_{ht}^{k} - \eta_g^{k} \nabla_d \ell(d_{ht}^{k}) + e_{ht}^k - d_{ht}^{\star} 
    = r_{ht}^{k} - \eta_g^k (\nabla_d \ell(d_{ht}^{k}) - \nabla_d \ell(d_{ht}^{\star})) + e_{ht}^k.
\end{align}
Note that \(\nabla_d \ell(d_{ht}^\star) = 0\) by optimality. 
Since \(\ell\) is assumed \(\mu\)-strongly convex and differentiable, its gradient satisfies:
\begin{equation}
\nonumber
    \frac{\nabla_d \ell(d_{ht}^{k}) - \nabla_d \ell(d_{ht}^{\star})}{d_{ht}^{k} - d_{ht}^{\star}} \ge \mu.
\end{equation}
This implies:
\begin{equation}
\nonumber
    \frac{\nabla_d \ell(d_{ht}^{k}) - \nabla_d \ell(d_{ht}^{\star})}{r_{ht}^{k}} \ge \mu.
\end{equation}
We define the contraction coefficient:
\begin{equation}
\nonumber
    q_k := 1 - \eta_g^k \cdot \frac{\nabla_d \ell(d_{ht}^{k}) - \nabla_d \ell(d_{ht}^{\star})}{r_{ht}^{k}} \quad \Rightarrow \quad |q_k| \le 1 - \mu \eta_g^k.
\end{equation}
Due to \(0 < \beta < \frac{2}{\mu} e^{-D} \), then there exists \(q < 1\) such that:
\begin{equation}
\nonumber
    |q_k| \le q < 1.
\end{equation}

From the recurrence:
\begin{equation}
\nonumber
    r_{ht}^{k+1} = q_k r_{ht}^k + e_{ht}^k,
\end{equation}
we obtain the upper bound:
\begin{equation}
\nonumber
    |r_{ht}^{k+1}| \le q |r_{ht}^k| + |e_{ht}^k|.
\end{equation}
Unrolling the recursion gives:
\begin{equation}
\nonumber
    |r_{ht}^{k}| \le q^k |r_{ht}^{0}| + \sum_{i=0}^{k-1} q^{k-1-i} |e_{ht}^i|.
\end{equation}

Let \( \epsilon := \sum_{i=0}^\infty |e_{ht}^i| < \infty \). For any \( \varepsilon > 0 \), choose \( N \) large enough so that:
\begin{equation}
\nonumber
    \sum_{i=N}^\infty |e_{ht}^i| < \varepsilon / 2.
\end{equation}
Split the perturbation sum as:
\begin{align}
\nonumber
\sum_{i=0}^{k-1} q^{k-1-i} |e_{ht}^i| = \sum_{i=0}^{N-1} q^{k-1-i} |e_{ht}^i| + \sum_{i=N}^{k-1} q^{k-1-i} |e_{ht}^i|.
\end{align}
The first part satisfies:
\begin{equation}
\nonumber
    \sum_{i=0}^{N-1} q^{k-1-i} |e_{ht}^i| \le \max_{0 \le i < N} |e_{ht}^i| \cdot \sum_{j=0}^{k-1-N} q^j = O(q^{k-N}) \to 0,
\end{equation}
and the second is bounded by \( \varepsilon / 2 \). Hence:
\begin{equation}
\nonumber
    \limsup_{k \to \infty} |r_{ht}^k| \le \varepsilon.
\end{equation}
As \( \varepsilon > 0 \) can be chosen arbitrarily small, we conclude by definition of limit that:
\begin{equation}
    \lim_{k \to \infty} |r_{ht}^k| = 0 \quad \Rightarrow \quad \lim_{k \to \infty} d_{ht}^k = d_{ht}^\star.
\end{equation}
Therefore, under the combined assumptions of strong convexity, bounded geometry-aware step size, and summable perturbation, the sequence converges pointwise. We conclude:
\begin{equation}
    \lim_{k \to \infty} d_{ht}^k = d_{ht}^\star
    \label{eq:dis_convergence}
\end{equation}
This completes the proof that the RicciKGE distance sequence converges precisely to the optimal point despite small curvature-induced noise.

\section{Details of experiments}
\label{app:more_exp}

\subsection{Dataset details}
\label{secLexperiment_datasets}
We evaluate our model on two types of graph representation tasks: link prediction and node classification. The datasets span diverse structures, scales, and curvature profiles, allowing a comprehensive assessment of our curvature-aware framework. Table~\ref{tab:link prediction datasets}and Table~\ref{tab:node classification datasets} summarize the key statistics and usage details for each dataset.

\begin{table}[h]
  \caption{Summary of link prediction datasets.}
  \label{tab:link prediction datasets}
  \centering
  \begin{tabular}{c|c|c|c|c|c}
\hline
Dataset  & Entities & Relations & Train Triplets & Valid & Test \\ \hline
WN18RR   & 40943    & 11        & 86835          & 3034  & 3134 \\ \hline
FB15K-237 & 14541   & 237       & 272115         & 17535  & 20466 \\ \hline
YAGO3-10 & 123182   & 37        & 1079040        & 5000  & 5000 \\ \hline
\end{tabular}
\end{table}

\paragraph{Link prediction datasets}
These datasets are standard benchmarks for knowledge graph completion. We adopt the canonical train/validation/test splits as provided in the literature. For WN18RR~\citep{dettmers2018convolutional} and FB15K-237~\citep{toutanova-chen-2015-observed}, the datasets are constructed to remove inverse relation leakage from their original counterparts (WN18 and FB15K). YAGO3-10~\citep{mahdisoltani2013yago3} is a large-scale subset of the YAGO knowledge base, where entities with fewer than 10 associated relations are filtered out to ensure sufficient connectivity. Entities and relations are indexed and tokenized without additional preprocessing.

\begin{table}[h]
  \caption{Summary of node classification datasets.}
  \label{tab:node classification datasets}
  \centering
\begin{tabular}{c|c|c|c}
\hline
\textbf{Dataset} & \textbf{Nodes} & \textbf{Edges} & \textbf{Classes} \\ \hline
Roman-Empire     & 22662          & 32927          & 18               \\ \hline
Tolokers         & 11758          & 519000         & 2                \\ \hline
Cora\_Full       & 19793          & 64000          & 70               \\ \hline
Pubmed           & 19717          & 44338          & 3                \\ \hline
OGBN-Arxiv       & 169343         & 1166243        & 40               \\ \hline
\end{tabular}
\end{table}

\paragraph{Node classification datasets}
We evaluate on five large-scale graphs across homophilous and heterophilous regimes. For all datasets, we randomly shuffle nodes and split them into training/validation/test sets in a 60\%/20\%/20\% ratio. Node features in Roman-Empire and Tolokersk~\citep{platonov2023critical} are derived from word embeddings and platform metadata, respectively. Cora\_Full and Pubmed~\citep{bojchevski2017deep} use sparse bag-of-words TF/IDF features, while OGBN-Arxiv~\citep{hu2020open} uses dense word2vec-style embeddings of paper titles and abstracts. All graphs are treated as undirected except for OGBN-Arxiv, which retains its original directed citation edges. No additional normalization or feature scaling is applied beyond dataset-provided values.

\subsection{Baselines and evaluation metrics}
\label{secLexperiment_Baselines and Evaluation Metrics}
To fairly evaluate our method on knowledge graph embedding and node classification tasks, we compare against three KGE methods with different types of distance functions, and three different GNN methods.

\subsubsection{Knowledge graph embedding baselines}

\textbf{TransE}~\citep{bordes2013translating} models a relation as a vector translation in embedding space. For a true triplet $(h, r, t)$, it enforces $\mathbf{h} + \mathbf{r} \approx \mathbf{t}$, where embeddings are constrained to lie in a low-dimensional Euclidean space. It captures one-to-one relations well but struggles with 1-to-N or N-to-N mappings.

\textbf{DistMult}~\citep{yang2014embedding} adopts a bilinear scoring function by modeling each relation as a diagonal matrix. The score for a triplet is computed as $f(h, r, t) = \langle \mathbf{h}, \mathbf{r}, \mathbf{t} \rangle$, where $\langle \cdot \rangle$ denotes tri-linear dot product. It is symmetric and thus cannot distinguish asymmetric relations.

\textbf{RotatE}~\citep{sun2019rotate}\footnote{We extend the implementation from \url{https://github.com/DeepGraphLearning/KnowledgeGraphEmbedding} for our link prediction tasks.} embeds entities and relations into the complex vector space, interpreting each relation as a rotation from head to tail. Formally, $\mathbf{t} \approx \mathbf{h} \circ \mathbf{r}$, where $\mathbf{r} \in \mathbb{C}^d$ with modulus $1$, and $\circ$ denotes element-wise complex multiplication. This enables modeling of symmetry, anti-symmetry, inversion, and composition.

\textbf{AttH}~\citep{chami2020low} embeds knowledge graphs in hyperbolic space to better capture hierarchical and power-law structures. 
It introduces an attention mechanism over relation-specific curvature, enabling different relations to adaptively select the appropriate geometry. 
The scoring function is defined by the hyperbolic distance between the transformed head and the tail, 
i.e., $f(h,r,t) = -d_{\mathbb{H}}(\mathbf{h}\oplus_{\mathbf{c}_r}\mathbf{r}, \mathbf{t})$, 
where $\oplus_{\mathbf{c}_r}$ denotes Möbius addition under relation-dependent curvature $\mathbf{c}_r$. 
By leveraging both hyperbolic geometry and attention, AttH effectively models hierarchical patterns and varying relation structures.  

\textbf{GoldE}~\citep{li2024generalizing} proposes a universal orthogonal parameterization framework that generalizes knowledge graph embedding across Euclidean, hyperbolic, and spherical spaces, as well as their product combinations. 
It learns orthogonal transformations that flexibly adapt to different curvature settings, making it capable of capturing heterogeneous relational patterns within a single unified model. 
The score function follows the Lorentzian distance form, $f(h,r,t) = -\|\mathbf{h}\circ \mathbf{r} - \mathbf{t}\|_{\mathcal{L}}^2$, 
where $\|\cdot\|_{\mathcal{L}}$ denotes the Lorentzian norm. 
GoldE achieves strong performance across diverse benchmarks by unifying multiple geometric assumptions under one parameterization.

\subsubsection{Node classification baselines}

\textbf{GCN}~\citep{kipf2016semi} performs spectral convolutions on graphs. Each layer aggregates feature information from immediate neighbors via a normalized Laplacian operator:
\[
\mathbf{H}^{(l+1)} = \sigma\left( \tilde{D}^{-1/2} \tilde{A} \tilde{D}^{-1/2} \mathbf{H}^{(l)} \mathbf{W}^{(l)} \right),
\]
where $\tilde{A} = A + I$, $\tilde{D}$ is its degree matrix, and $\sigma$ is a nonlinearity.

\textbf{GAT}~\citep{velivckovic2017graph} generalizes GCNs by assigning attention weights to neighbors. The attention coefficient between nodes $i$ and $j$ is given by:
\[
\alpha_{ij} = \mathrm{softmax}_j\left( \mathrm{LeakyReLU}(\mathbf{a}^\top [\mathbf{W}\mathbf{h}_i \| \mathbf{W}\mathbf{h}_j]) \right).
\]

\textbf{GNRF}~\citep{chen2025graph}\footnote{We extend the implementation from \url{https://github.com/Loong-Chan/GNRF_new} for our node classification tasks.} is a curvature-driven model where node embeddings evolve via Ricci flow. The edge weights $w_{ij}(t)$ adapt over time based on curvature $\kappa_{ij}(t)$, and node features evolve as:
\[
\frac{d\mathbf{h}_i}{dt} = \sum_{j \sim i} -\kappa'_{ij}(t) \left[\mathbf{h}_j - \cos(\mathbf{h}_j, \mathbf{h}_i)\mathbf{h}_i \right].
\]

\subsubsection{Evaluation metrics}

\textbf{Link prediction.} We use filtered Mean Reciprocal Rank (MRR) and Hits@K (K=1, 3, 10) as evaluation metrics:
\[
\text{MRR} = \frac{1}{N} \sum_{i=1}^N \frac{1}{\text{rank}_i}, \quad
\text{Hits@K} = \frac{1}{N} \sum_{i=1}^N \mathbb{I}(\text{rank}_i \leq K),
\]
where $\text{rank}_i$ is the position of the correct entity and $\mathbb{I}$ is the indicator function.

\textbf{Node classification.} We evaluate performance using standard classification accuracy, defined as:
\[
\text{Accuracy} = \frac{|\{v \in \mathcal{V}_{\text{test}} : \hat{y}_v = y_v\}|}{|\mathcal{V}_{\text{test}}|},
\]
where $\hat{y}_v$ denotes the predicted label for node $v$, $y_v$ is the ground-truth label, and $\mathcal{V}_{\text{test}}$ is the set of test nodes.

\subsection{Implementation details}
\label{secLexperiment_Implementation Details}
All experiments are implemented in Python 3.9 and executed on a Linux server equipped with an Intel(R) Xeon(R) Gold 6240R CPU @ 2.40GHz, 256 GB of RAM, and four NVIDIA RTX 4090 GPUs (24 GB VRAM each), running CUDA 12.2 and driver version 535.230.02. Our code is based on PyTorch 2.3.1. For node classification, we use PyTorch Geometric 2.6.1.

\subsection{Hyperparameter settings}
\label{secLexperiment_Hyperparameter Settings}

We organize hyperparameter settings into three parts: knowledge graph embedding (KGE), Ricci flow coupling, and node classification.

\paragraph{Knowledge graph embedding.}  
For KGE experiments on WN18RR, FB15K-237, and YAGO3-10, all baselines are trained with a margin-based ranking loss using batch size 512, 1024 negative samples per positive triple, and up to 100k training steps. 
Optimization is performed with Adam and early stopping on validation MRR. 
\textbf{TransE} adopts margin values of 6.0 (WN18RR) and 24.0 (YAGO3-10), with learning rates of $5 \times 10^{-5}$ and $2 \times 10^{-4}$, respectively. 
\textbf{DistMult} uses slightly larger learning rates and adds L2 regularization. 
\textbf{RotatE} is trained in the complex space with the same margins and learning rates as TransE. 
\textbf{AttH} and \textbf{GoldE} follow their original hyperparameter settings, with learning rates tuned per dataset. 
For all models, test batch sizes are adjusted per dataset to fit GPU memory.  

\paragraph{Ricci flow configuration.}  
Ricci flow is invoked every 5 epochs to update edge weights. Unless otherwise specified, we fix the curvature-task coupling coefficient $\beta$ to 0.1.

\paragraph{Node classification.}  
We evaluate on five datasets: Roman-Empire, Tolokers, Cora\_Full, Pubmed, and OGBN-Arxiv, each trained for 500 epochs (1000 for OGBN-Arxiv) using the GNRF framework and an implicit ODE solver. The curvature-feature coupling coefficient $\beta$ is set to 0.01. The ODE integration interval $[10^{-5}, t_1]$ is dataset-specific, with $t_1$ ranging from 1.56 to 7.18. Hidden dimensions are set to 256 or 64 depending on dataset size, and dropout rates range from 0.09 to 0.56. Learning rates vary from \(6.8 \times 10^{-4}\) to \(1.1 \times 10^{-2}\), with Adam optimizer and tuned weight decay. Batch normalization is applied at both input and output layers, except for OGBN-Arxiv where only output normalization is used. Adjoint integration is enabled only for OGBN-Arxiv to reduce memory usage.

\begin{table}[t]
\centering
\small
\caption{Runtime comparison of RicciKGE vs. vanilla backbones. 
Although per-epoch time is higher, the total time-to-target MRR is reduced due to faster convergence.}
\label{tab:runtime}
\begin{tabular}{lcccc}
\toprule
Dataset & Model & Epoch Time & Epochs to Target MRR & Total Time \\
\midrule
WN18RR & TransE & 14.05s & 29 & 407s \\
WN18RR & TransE + Ricci & 22.65s & 14 & 317s \\
\midrule
FB15k-237 & GoldE & 25.8 min / 10k steps & 55,000 & 141.9 min \\
FB15k-237 & GoldE + Ricci & 41.8 min / 10k steps & 30,000 & 124.5 min \\
\bottomrule
\end{tabular}
\end{table}

\subsection{Efficiency experiment}
\label{app:efficiency}

We report both per-epoch runtime and total wall-clock time to reach the same target MRR in Table~\ref{tab:runtime}. 
While RicciKGE increases the per-epoch time (e.g., 14.05s $\to$ 22.65s on WN18RR, and 25.8 $\to$ 41.8 minutes per 10k steps on FB15k-237), 
the number of epochs required for convergence is significantly reduced (29 $\to$ 14 and 55k $\to$ 30k, respectively). 
As a result, the total time-to-target MRR decreases by 22\% on WN18RR and 12\% on FB15k-237, 
demonstrating that Ricci-guided updates improve optimization efficiency in practice. 
These results validate our theoretical convergence analysis and highlight RicciKGE’s scalability advantages under realistic wall-clock budgets.

\begin{algorithm}[t]
\caption{RicciKGE with Discrete Distance Flow and Structured Embedding Evolution}
\label{alg:riccikge_grad_structured}
\KwIn{Triplets $\mathcal{T}$, entity embeddings $\mathbf{E}$, graph structure $G$, relation mappings $f_r(\cdot)$, loss $\ell(d)$, Ricci curvature $\kappa(\cdot)$, coupling coefficient $\beta$}
\KwOut{Updated entity embeddings $\mathbf{E}^{(k+1)}$}

\ForEach{entity $v \in \mathcal{E}$}{
    Initialize update accumulator: $\Delta E^k(v) \leftarrow 0$
}

\ForEach{triplet $(h, r, t) \in \mathcal{T}$}{
    Compute $f_r(E(h))$, $d_{ht} = \|f_r(E(h)) - E(t)\|$\;
    Compute edge weight: $w_{ht} = \exp(-d_{ht})$\;
    Compute curvature: $\kappa_{ht} = \texttt{Ricci}(h,t,E,G)$\;
    Compute gradient-based step size: $\eta_g = \frac{\beta}{2 w_{ht}}$\;
    Compute loss gradient: $g = \nabla_d \ell(d_{ht})$\;
    \textit{Distance evolution:}
    \[
    d_{ht}^{k+1} = d_{ht} - \eta_g \cdot g + \tfrac{1}{2} \kappa_{ht}
    \]
    \[
    \delta_d = d_{ht}^{k+1} - d_{ht}
    \]
    Compute: 
    \[
    a := \nabla_{E(h)} f_r(E(h))^\top \cdot \nabla_{f_r(h)} d, \quad b := \nabla_{E(t)} d
    \]
    \[
    \lambda^k = \frac{-2\,\delta_d}{\|a\|^2 + \|b\|^2}
    \]

    Accumulate embedding updates:
     $\Delta E^k(h) \mathrel{+}= \lambda^k \cdot a$, $\Delta E^k(t) \mathrel{+}= \lambda^k \cdot b$
    
}

\ForEach{entity $v \in \mathcal{E}$}{
    Update embedding: $E^{(k+1)}(v) \leftarrow E^{(k)}(v) + \Delta E^k(v)$
}
\Return{$\mathbf{E}^{(k+1)}$}
\end{algorithm}

\subsection{Algorithm description}  
Algorithm~\ref{alg:riccikge_grad_structured} outlines the structured update mechanism used in RicciKGE. For each triplet $(h, r, t)$, we first compute the distance between the projected head entity $f_r(E(h))$ and the tail entity $E(t)$, followed by the edge weight via an exponential kernel. The Ricci curvature $\kappa_{ht}$ is computed with respect to the graph geometry and current entity embeddings. The distance is then evolved through a discrete forward step that combines a task-driven gradient descent term with a curvature correction term, as given by the Ricci-extended flow update rule.

To preserve alignment between geometry and optimization, we compute the induced change in distance $\delta_d$, and solve for the scalar step coefficient $\lambda^k$ using a minimal-perturbation formulation. This ensures that entity updates follow the least-effort direction while maintaining consistency with the flow dynamics. Gradients are computed with respect to both the head and tail embeddings, structured via the Jacobian of the relation mapping $f_r(\cdot)$. The total embedding update for each entity is accumulated across all incident triples. At the end of each iteration, the embeddings are synchronously updated, enabling co-evolution of geometry and representation in a unified framework.

\subsection{Extended results}
\label{secLexperiment_Extended Results}

\begin{figure}[h]
 \centering
  \includegraphics[width=1.0\linewidth]{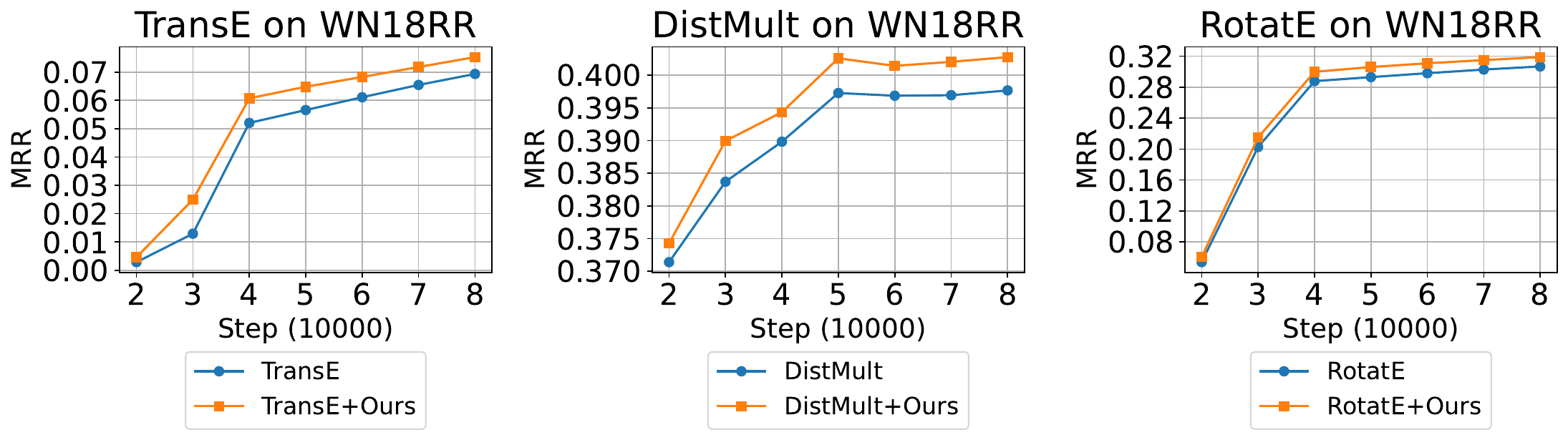}
  \caption{ Performance comparison of baseline KGE models and RicciKGE-enhanced variants on WN18RR over training steps. 
    Blue curves represent the vanilla baseline models, while yellow curves denote models trained with our RicciKGE extension.}
  \label{fig:riccikge_mrr}
\end{figure}

Figure~\ref{fig:riccikge_mrr} shows the MRR performance of three representative knowledge graph embedding models (TransE, DistMult, RotatE) on the WN18RR dataset during training.
We compare the vanilla models against their RicciKGE-enhanced counterparts, where discrete Ricci flow is integrated to guide geometry-aware optimization.
Each curve tracks the MRR across training steps, with horizontal axes representing step counts (in units of 10,000).
We observe that in all cases, RicciKGE leads to faster convergence and consistently higher MRR, demonstrating its effectiveness as a geometric regularizer that adapts local metric structures throughout the embedding process.

\subsection{Limitations and future work}
While \textsc{RicciKGE} introduces a principled framework for learning curvature-aware knowledge graph embeddings via discrete extended Ricci flow, several open challenges remain.

\paragraph{Static Graph Assumption and Structural Rigidity.}
 Our current design assumes a fixed knowledge graph topology throughout the training, which is common in the KGE literature. However, real-world KGs are typically \textit{incomplete} (e.g., missing latent but meaningful links) and \textit{noisy} (e.g., spurious triples from automated extraction). Fixing the graph limits the expressiveness of curvature: areas with overly negative curvature may reflect data sparsity rather than intrinsic hierarchy, and spurious links with anomalous positive curvature may distort local flow dynamics. Moreover, recent advances in graph rewiring (e.g., AFR~\citep{fesser2024mitigating}) show that \textit{dynamically adjusting the graph structure using geometric signals} can significantly mitigate over-squashing and over-smoothing. Extending \textsc{RicciKGE} to support \textit{curvature-guided edge rewiring}---where triples with high model confidence are added to low-curvature bottlenecks, and suspicious ones are pruned---could allow joint optimization of embeddings and graph structure. This would enable the Ricci flow to operate not only as a regularizer but also as a topological refiner.

\paragraph{Cost of Full-Graph Curvature Recalculation.} 
Although \textsc{RicciKGE} captures geometric dynamics by iteratively evolving the curvature, computing the exact Ricci curvature (especially discrete variants based on Ollivier or optimal transport) across the entire graph remains computationally demanding. This overhead becomes a bottleneck in large-scale or streaming settings. In this work, we adopt full-graph updates to ensure theoretical consistency with the discrete Ricci flow and to avoid introducing additional sources of approximation that could affect convergence analysis. However, it is well-known that Ricci curvature is sensitive only to local perturbations. This suggests that a \textit{local curvature refresh mechanism}---where only the edges incident to significantly updated entities are recomputed---may provide a practical acceleration while preserving flow alignment. Moreover, stochastic curvature sampling (e.g., refreshing a random subset of edges at each iteration) can further amortize the cost over training. Although these ideas are orthogonal to our method and conceptually simple to implement, they require careful design to avoid loss of geometric fidelity or optimization instability. We leave their systematic integration and analysis to future work.

\paragraph{Scoring Function Compatibility.} 
Our current formulation adopts a unified scoring representation of the form \( d(f_r(E(h)), E(t)) \), which consists of a wide class of translational and geometric embedding models. Although this may seem restrictive, we emphasize that this form is not essential to the underlying optimization or evolution of the curvature of our framework. In fact, our coupling scheme between gradient dynamics and Ricci flow operates at the level of embedding updates and energy dissipation, and our theoretical convergence analysis does not depend on the specific algebraic structure of the scoring function. A long-term goal is to develop a unified curvature-guided embedding framework that encompasses a broad spectrum of KGE models while preserving geometric interpretability.

Together, these limitations highlight both the scope and the potential of geometry-aware KGE. Addressing them would enable \textsc{RicciKGE} to serve not only as a curvature-aligned embedding method but also as a general-purpose tool for jointly evolving graph geometry, structure, and semantic representations.

\section{Usage of LLM}
\label{app:LLM_use}
In preparing this work, we made limited use of ChatGPT (OpenAI) as an auxiliary tool. 
Its use was restricted to the following aspects:  
\begin{itemize}
    \item \textbf{Coding assistance}: ChatGPT-4o was occasionally consulted during debugging to suggest potential fixes for coding errors. All code was ultimately written, tested, and validated independently by the authors.  
    \item \textbf{Language editing}: At the final stage of manuscript preparation, ChatGPT-5 was used to polish the English writing of the appendix. All suggestions were critically reviewed and adapted by the authors to preserve technical accuracy and consistency.  
\end{itemize}

No AI tool was involved in formulating research ideas, designing or conducting experiments, or drawing scientific conclusions. All intellectual contributions are solely those of the authors.

\end{document}